\newtheorem{theorem}{Theorem}[section]
\newtheorem{lemma}[theorem]{Lemma}
\newtheorem{definition}[theorem]{Definition}
\newtheorem{proposition}[theorem]{Proposition}
\newtheorem{corollary}[theorem]{Corollary}
\newcommand{\wh}{\widehat}
\newcommand{\wt}{\widetilde}
\newcommand{\bepsilon}{\boldsymbol{\epsilon}}
\newcommand{\cH}{\mathcal{H}}
\renewcommand{\varepsilon}{\epsilon}
\DeclareMathOperator{\aux}{aux}
\DeclareMathOperator{\poly}{poly}
\DeclareMathOperator{\one}{\boldsymbol{1}}
\DeclareMathOperator{\clip}{clip}
\newcommand{\RR}{\mathbb{R}}
\newcommand{\cT}{\mathcal{T}}
\newcommand{\cS}{\mathcal{S}}
\newcommand{\cA}{\mathcal{A}}
\newcommand{\ms}{\sigma^{\infty}}
\newcommand{\mpi}[1]{\pi^{\infty #1}}
\newcommand{\cM}{\mathcal{M}}
\newcommand{\EE}{\mathbb{E}}
\newcommand{\cE}{\mathcal{E}}
\newcommand{\bw}{\boldsymbol{w}}
\newcommand{\bsigma}{\boldsymbol{\sigma}}
\newcommand{\bxi}{\boldsymbol{\xi}}
\newcommand{\cG}{\mathcal{G}}
\newcommand{\bg}{\boldsymbol{g}}
\newcommand{\bQ}{\boldsymbol{Q}}
\newcommand{\bP}{\boldsymbol{P}}
\newcommand{\br}{\boldsymbol{r}}
\newcommand{\bv}{\boldsymbol{v}}
\newcommand{\var}{\mathrm{var}}
\newcommand{\defeq}{:=}
\newcommand{\strat}{\sigma}
\newcommand{\statemin}{\cS_{\min}}
\newcommand{\statemax}{\cS_{\max}}
\newcommand{\states}{\cS}
\newcommand{\actions}{\cA}
\newcommand{\otilde}{\wt{O}}
\title{\huge Solving Discounted Stochastic Two-Player Games\\
with Near-Optimal Time and Sample Complexity}
\author{%
	Aaron Sidford\thanks{Supported by NSF CAREER Award CCF-184485} \\
	Stanford University\\
	\texttt{sidford@stanford.edu}
	\and
	Mengdi Wang\\
	Princeton University\\
	\texttt{mengdiw@princeton.edu}
	\and
	Lin F. Yang\\
	University of California, Los Angeles\\
	\texttt{lyang36@g.ucla.edu}
	\and
	Yinyu Ye\\
	Stanford University\\
	\texttt{yyye@stanford.edu}
}
\begin{document}

\maketitle

\begin{abstract}
In this paper we settle  the sampling complexity of solving discounted two-player turn-based zero-sum stochastic games up to polylogarithmic factors. Given a stochastic game 
with discount factor $\gamma\in(0,1)$ we provide an algorithm that computes an $\epsilon$-optimal strategy with high-probability 
given $\otilde((1 - \gamma)^{-3} \epsilon^{-2})$ samples from the transition function for each state-action-pair. Our algorithm runs in time nearly linear in the number of samples and uses space nearly linear in the number of state-action pairs. As stochastic games generalize Markov decision processes (MDPs) our runtime and sample complexities are optimal due to \cite{azar2013minimax}. We achieve our results by showing how to generalize a near-optimal Q-learning based algorithms for MDP, 
 in particular
 \cite{sidford2018near},  to two-player strategy computation algorithms. This overcomes limitations of standard Q-learning and strategy iteration or alternating minimization based approaches and we hope will pave the way for future reinforcement learning results by facilitating the extension of MDP results to multi-agent settings with little loss. 
\end{abstract}

\clearpage

\section{Introduction}
\label{sec:intro}

In this paper we study the sample complexity of learning a near-optimal strategy in discounted two-player turn-based zero-sum stochastic games \cite{shapley1953stochastic,hansen2013strategy}, which we refer to more concisely as {\it stochastic games}. Stochastic games model dynamic strategic settings in which two players take turns and the state of game evolves stochastically according to some transition law. 
This model 
 encapsulates a major challenge in multi-agent learning: other agents may be learning and adapting as well.
Further, stochastic games are a generalization of the Markov decision process (MDP), a fundamental model for reinforcement learning, to the two-player setting \cite{littman1994markov}. 
MDPs can be viewed as degenerate stochastic games in which one of the players has no influence.
Consequently, understanding stochastic games is a natural step towards resolving challenges in reinforcement learning of extending single-agent learning to multi-agent settings. 

There is a long line of research in both MDPs and stochastic games (for a more thorough introduction, see \cite{filar2012competitive, hansen2013strategy} and references therein). Strikingly, \cite{hansen2013strategy} showed that there exists a pure-strategy Nash equilibrium which can be computed in strongly polynomial time for stochastic games, if the game matrix is fully accessible and the discount factor is fixed. In reinforcement learning settings, however, the transition function of the game is unknown and a common goal is to obtain an approximately optimal strategy (a function that maps states to actions) that is able to obtain an expected cumulative reward of at least (or at most) the Nash equilibrium value no matter what the other player does. Unfortunately, despite interest in generalizing MDP results to stochastic games, currently the best known running times/sample complexity for solving stochastic games in a variety of settings are worse than for solving MDPs. This may not be surprising since in general stochastic games are harder to solve than MDPs, e.g., whereas MDPS can be solved in (weakly) polynomial time it remains open whether or not the same can be done for 
stochastic games.


There are two natural approaches towards achieving sample complexity bounds for solving stochastic games. The first is to note that the popular stochastic value iteration, dynamic programming, and Q-learning methods all apply to stochastic games \cite{littman1994markov,hu2003nash,littman2001friend,perolat2015approximate}. Consequently, recent advances in these methods 
\cite{kearns1999finite,  sidford2018variance} developed for MDPs can be
directly generalized to solving stochastic games (though the sample complexity of these generalized methods has not been analyzed previously). It is tempting to generalize the analysis of sample optimal methods for estimating values \cite{azar2013minimax} and estimating policies \cite{sidford2018near} of MDPs to stochastic games. However, this is challenging as these methods rely on monotonicities in MDPs induced by the linear program nature of the problem \cite{azar2013minimax, sidford2018near}.

 The second approach would be to apply strategy iteration or alternating minimization / maximization to reduce solving stochastic games to approximately solving a sequence of MDPs. Unfortunately, the best analysis of such a method  \cite{hansen2013strategy} requires solving $\Omega(1/(1- \gamma))$ MDPs. Consequently, even if this approach could be carried out with approximate MDP solvers, the resulting sample complexity for solving stochastic games would be larger than that needed for solving MDPs. More discussion of related literatures is given in Section~\ref{sec:prev_work}.

Given the importance of solving stochastic games in reinforcement learning (e.g. \cite{hu1998multiagent, bowling2000analysis, bowling2001rational, hu2003nash, arslan2017decentralized}), this suggests the following fundamental open problem:

\emph{Can we design stochastic game learning algorithms that provably match the performance of MDP algorithms and achieve near-optimal sample complexities?}

In this paper, we answer this question in the affirmative in the particular case of solving discounted stochastic games with a generative model, i.e. an oracle for sampling from the transition function for state-action pairs. We provide an algorithm with the same near-optimal sample complexity that is known for solving discounted MDPs. Further, we achieve this result by showing how to transform particular MDP algorithms to solving stochastic games that satisfy particular two-sided monotonicity constraints. 
Therefore, while there is a major gap between MDPs and stochastic games in terms of computation time for obtaining the exact solutions, this gap disappears when considering the sampling complexity between the two.
We hope this work opens the door to more generally extend results for MDP to stochastic games and thereby enable the application of the rich research on reinforcement learning to a broader multi-player settings with little overhead.

\subsection{The Model}

Formally, throughout this paper, we consider \emph{discounted turn-based two-player zero-sum stochastic games} described as the tuple $\cG=(\cS_{\min}, \cS_{\max}, \cA, \bP, \br, \gamma)$. In these games there are two players, a \emph{min} or \emph{minimization} player which seeks to minimize the cumulative reward in the game and a \emph{max} or \emph{maximization} player which seeks to maximize the cumulative reward. Here, $\cS_{\min}$ and $\cS_{\max}$ are disjoint finite sets of \emph{states} controlled by the min-player and the max-player respectively and their union $\cS \defeq \cS_{\min} \cup \cS_{\max}$ is the set of all possible \emph{states of the game}. Further, $\cA$ is a finite set of \emph{actions} available at each state, $\bP: \cS \times  \cA \times \cS \mapsto [0,1]$ is a \emph{transition probability function}, $\br: \cS \times\cA  \mapsto [0,1]$ is the payoff or \emph{reward function} and $\gamma\in(0,1)$ is a discount factor.\footnote{Standard reductions allow this result to be applied for rewards of a broader range \cite{sidford2018near}. Further, while we assume there are the same number of actions per-state, our results easily extend to the case where this is non-uniform; in this case our dependencies on $|\cS| |\cA|$ can be replaced with the number of state-action pairs.}

Stochastic games $\cG=(\cS_{\min}, \cS_{\max}, \cA, \bP, \br, \gamma)$ are played dynamically in a sequence of turns, $\{t\}_{t=0}^{\infty}$, starting from some initial state $s^0 \in \cS$ at turn $t = 0$. In each turn $t \geq 0$, the game is in one of the states $s^t \in \cS$ and the player who controls the state $s^t$  chooses or \emph{plays} an 
action $a^t$ from the action space $\cA$. This action yields reward $r^t \defeq r(s^t, a^t)$ for the turn and causes the next state $s^{t+1}$ to be chosen at random from $\states$ where the transition probability $\Pr[s^{t + 1} = s' | s_1,...,s_t,a_1,...,a_t] = \bP(s'~|~s^t, a^t)$. 
The goal of the min-player (resp. max-player) is to choose actions to minimize (resp. maximize) the expected infinite-horizon discounted-reward or \emph{value} of the game $\sum_{t = 0}^{\infty} \gamma^t r^t$.
 
In this paper we focus on the case where the players play pure (deterministic) stationary strategies (policies), i.e. strategies which depend only on the current state. That is we wish to compute a \emph{min-player strategy} or \emph{policy}
$\pi_{\min}:\cS_{\min}\rightarrow \cA$ which defines the action the min player chooses at a state in $\cS_{\min}$ and \emph{max-player strategy} $\pi_{\max}:\cS_{\max}\rightarrow \cA$ which defines the action the max player chooses at a state in $\cS_{\max}$.
We call a pair of min-player and max-player strategies $\sigma = (\pi_{\min}, \pi_{\max})$ simply a \emph{strategy}.  Further, we let $\sigma(s) \defeq \pi_{\min}(s)$ for $s\in \cS_{\min}$ and $\sigma(s) \defeq \pi_{\max}(s)$ for $s\in \cS_{\max}$ and define the  \emph{value function} or \emph{expected discounted cumulative reward} by $\bv^{\sigma}$ where
\[
\bv^{\sigma}(s) = \bv[\sigma](s):= \EE\bigg[\sum_{t=0}^{\infty}\gamma^t\br(s^t, \sigma(s^t))~\Big|~s^0 = s\bigg] \quad \text{for all } s \in \cS
\]
and the expectation is over the random sequence of states, $s^0, s^1, s^2, \ldots $ generated according to $\bP$ under the strategy $\sigma$, i.e. $\Pr[s^{t + 1} = s'~|~s^t, s^{t - 1}, .... ,s^{0}] = \bP(s'~|~s^t, \strat(s^t))$ for all $t > 0$. 

Our goal in solving a game is to compute an approximate {\it Nash equilibrium} restricted to stationary strategies \cite{nash1951non,marskin2001markov}. 
We call a strategy $\sigma = (\pi_{\min}, \pi_{\max})$ an {\it equilibrium strategy} or \emph{optimal} if
\[
\max_{\pi_{\max}' : \statemax \rightarrow \actions} \bv^{(\pi_{\min},\pi_{\max}')}  \leq 
 \bv^{\sigma} \leq  \min_{\pi_{\min}' : \statemin \rightarrow \actions} \bv^{(\pi_{\min}',\pi_{\max})}.
\] 
and we call it $\epsilon$-optimal if these same inequalities hold up to an additive $\epsilon$ entrywise.
It is worth noting that the best response strategy to a stationary policy is also stationary \cite{fudenberg1991game} and
there always exists a pure stationary strategy attaining the Nash equilibrium  \cite{shapley1953stochastic}. Consequently, it is sufficient to focus on deterministic strategies. 



Throughout this paper we focus on solving stochastic games in the learning setting where the game is not fully specified. We assume that a \emph{generative model} is available which given any state-action pair, i.e. $s \in \states$ and $a \in \actions$, can sample a random $s'$ independently at random from the transition probability function, i.e. $\Pr[s' = t] = \bP(t ~|~s, a)$. Accessibility to a generative model is a standard and natural assumption (\cite{kakade2003sample,  azar2013minimax, sidford2018near, agarwal2019optimality}) and corresponds to PAC learning. 
The special case of solving a MDP given a generative model has been studied extensively (\cite{kakade2003sample, azar2013minimax, sidford2018variance, sidford2018near, agarwal2019optimality}) and is a natural proving ground towards designing theoretically motivated reinforcement learning algorithms.

\subsection{Our Results}


In this paper 
we provide an algorithm that computes an  $\epsilon$-optimal strategy using a sample size that matches the best known sample complexity for solving discounted MDPs.
Further, our algorithm runs in time proportional to the number of samples and space proportional to $|\cS||\cA|$. Interestingly, we achieve this result by showing how to run two-player variant of Q-learning such that the value-strategy sequences induced enjoy certain monotonicity properties. Essentially, we show that provided a value improving algorithm is sufficiently stable, then it can be extended to the two-player setting with limited loss.
This allows us to leverage recent advances in solving single player games to solve stochastic games with limited overhead. Our main result is given below.


\begin{theorem}[Main Theorem]
	\label{mainthm}
	There is an algorithm which given a stochastic game, $\cG =(\cS_{\min}, \cS_{\max}, \bP, \br, \gamma)$ with a generative model, outputs, with probability at least $1-\delta$,
	an $\epsilon$-optimal strategy $\sigma$ by querying 
	$Z=\wt{O}(|\cS| |\cA| (1-\gamma)^{-3} \epsilon^{-2})$
	samples, where $\epsilon\in(0,1)$ and $\wt{O}(\cdot)$ hides polylogarithmic factors. The algorithm runs in time $O(Z)$ and uses space $O(|\cS||\cA|)$.
\end{theorem}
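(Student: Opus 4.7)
The plan is to generalize the variance-reduced Q-learning algorithm of \cite{sidford2018near} from MDPs to the two-player setting. The optimal Bellman operator $T$ of a stochastic game is defined on states $s \in \statemin$ by
\[
(T\bv)(s) = \min_{a \in \actions}\Big[\br(s,a) + \gamma \sum_{s'}\bP(s' \mid s,a)\bv(s')\Big],
\]
and analogously on $\statemax$ with $\min$ replaced by $\max$. This operator is still a $\gamma$-contraction in $\|\cdot\|_\infty$, its unique fixed point $\bv^*$ is the equilibrium value, and any $\wt{\bv}$ with $\|\wt{\bv} - \bv^*\|_\infty = O((1-\gamma)\epsilon)$ yields an $\epsilon$-optimal strategy by greedy extraction. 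So it suffices to approximate $\bv^*$ in $\ell_\infty$ to precision $\wt{O}((1-\gamma)\epsilon)$ within the claimed sample budget.

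I would proceed in two phases. A coarse phase uses $\wt{O}(|\cS||\cA|(1-\gamma)^{-3})$ samples to produce a reference value $\bv^{\text{ref}}$ close to $\bv^*$, together with its cached empirical Bellman image $\wh{T}\bv^{\text{ref}}$. A refinement phase then runs $K = \wt{\Theta}((1-\gamma)^{-1})$ variance-reduced updates of the form $\bv^{(k+1)} = \wh{T}\bv^{\text{ref}} + (\wh{T}_k\bv^{(k)} - \wh{T}_k\bv^{\text{ref}})$, where each $\wh{T}_k$ is built from a fresh small batch. Since $\bv^{(k)} - \bv^{\text{ref}}$ shrinks geometrically, the second-order term has tiny variance, and Bernstein's inequality combined with the total-variance bound $\|(\Var_{s'\sim\bP(\cdot\mid s,a)}\bv^*(s'))^{1/2}\|_\infty = O((1-\gamma)^{-3/2})$ delivers the $(1-\gamma)^{-3}\epsilon^{-2}$ rate.

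The main obstacle is preserving the monotonicity that makes this analysis work. In the MDP version \cite{sidford2018near} one perturbs $\wh{T}$ by a small deterministic bias so that $\bv^{(k)} \leq \bv^*$ throughout and the greedy policy $\pi^{(k)}$ satisfies $\bv^{\pi^{(k)}} \leq \bv^{(k)} + O(\epsilon)$; this short-circuits what would otherwise be a union bound over exponentially many policies. In a two-player game no single bias direction can favor both players simultaneously, since the min player wants underestimates and the max player wants overestimates. My plan is to maintain coupled lower and upper envelopes $\bv^{(k)}_L \leq \bv^* \leq \bv^{(k)}_U$, biased oppositely on $\statemin$ and $\statemax$, and prove by induction on $k$ that the sandwich is preserved and that the greedy strategies $\strat^{(k)}_L, \strat^{(k)}_U$ extracted from the envelopes form approximate best responses certifying $\epsilon$-optimality. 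The inductive step reduces to a stability lemma: small perturbations of the input value cause only proportionally small changes in the biased empirical Bellman image, which is the ``sufficiently stable'' property alluded to in the introduction.

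Given this invariant, the remaining accounting mirrors the MDP analysis. Summing samples across the coarse phase and the $K$ refinement rounds yields $\wt{O}(|\cS||\cA|(1-\gamma)^{-3}\epsilon^{-2})$ samples in total; each round touches every state-action pair $O(1)$ times, giving runtime $O(Z)$; and the only persistent state is the current envelopes together with the cached $\wh{T}\bv^{\text{ref}}$, using $O(|\cS||\cA|)$ space. The final strategy is extracted greedily from either envelope, and a union bound over the $\wt{O}(1)$ high-probability concentration events gives the claimed $1-\delta$ confidence.
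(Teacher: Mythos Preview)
Your proposal has the right high-level shape (variance reduction, monotonicity as the obstacle, two one-sided sequences) but contains two genuine gaps.

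First, a setup error. You write that it suffices to approximate $\bv^*$ to precision $\wt O((1-\gamma)\epsilon)$ and then extract greedily. If that were really the route, the sample cost to reach accuracy $u=(1-\gamma)\epsilon$ would be $\wt O(|\cS||\cA|(1-\gamma)^{-3}u^{-2}) = \wt O(|\cS||\cA|(1-\gamma)^{-5}\epsilon^{-2})$, off by $(1-\gamma)^{-2}$. The paper (like \cite{sidford2018near}) avoids this by maintaining, along the decreasing sequence, the invariant $\cH_{\pi_{\min}^{(i)}}[\bv^{(i)}]\le \bv^{(i)}$ on the \emph{strategy}, not just the value; this gives $\bv^{\pi_{\min}^{(R)}}\le \bv^{(R)}$ directly, so $\epsilon$-accuracy in value already yields an $\epsilon$-optimal min-player strategy with no $(1-\gamma)^{-1}$ blowup. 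Your envelope idea is in the right spirit, but you must state precisely which strategy-side invariant survives each update; ``biased oppositely on $\statemin$ and $\statemax$'' is not it (the paper's upper sequence sits above $\bv^*$ on \emph{all} states).

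Second, and more fundamentally, you say the inductive step ``reduces to a stability lemma'' about the empirical Bellman image. That is not where the difficulty is. The difficulty is the error-accumulation bound. In the MDP case one uses the greedy monotonicity $\bQ^{(i-1)}(s,\pi^*(s))\le \bQ^{(i-1)}(s,\pi^{(i-1)}(s))$ to obtain
\[
\bQ^* - \bQ^{(i)} \lesssim \gamma \bP^{\pi^*}(\bQ^* - \bQ^{(i-1)}) + \bepsilon^{(i)},
\]
unrolls against the single stationary $\pi^*$, and then invokes the total-variance identity for $(I-\gamma \bP^{\pi^*})^{-1}\sqrt{\var(\bv^*)}$. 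In a stochastic game this monotonicity is simply false: the Bellman operator mixes $\min$ and $\max$, so there is no fixed stationary strategy against which all the $\bepsilon^{(i)}$ can be charged. The paper's fix is not a perturbation lemma for $\wh T$; it is to construct a time-dependent (Markovian, non-stationary) auxiliary strategy $\sigma_{\aux}^{\infty(i)}$ with two properties: its value underestimates $\bv^*$, and it satisfies a one-step recursion $\bv^{(i)}-\bv[\sigma_{\aux}^{\infty(i)}]\le \gamma \bP^{\sigma_{\aux}^{(i)}}(\bv^{(i-1)}-\bv[\sigma_{\aux}^{\infty(i-1)}])+\bepsilon^{(i)}_{\sigma_{\aux}^{(i)}}$. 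Unrolling this produces products of \emph{different} transition matrices, and one then needs a new law-of-total-variance identity for non-stationary strategies (the paper's Lemma~\ref{lem:variance}) to recover the $(1-\gamma)^{-3}$ rate. Your proposal does not identify this mechanism, and without it the claimed sample complexity does not follow.
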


Our sample and time complexities are optimal due to a known lower bound in the single player case by \cite{azar2013minimax}. It was shown in \cite{azar2013minimax} that solving any one-player MDP to $\epsilon$-optimality with high probability needs at least $\Omega(|\cS||\cA|(1-\gamma)^{-3} \epsilon^{-2})$ samples. 
Our sample complexity upper bound generalizes the recent sharp sample complexity results for solving the discounted MDP \cite{sidford2018near, agarwal2019optimality}, and tightly matches the information-theoretic sample complexity up to polylogarithmic factors. This result provides the first and near-optimal sample complexity for solving the two-person stochastic game.

\subsection{Notations and Preliminaries}
\label{sec:prelim}

\paragraph{Notation:} 
We use $\one$ to denote the all-ones vector whose dimension is adapted to the context.
We use the operators $|\cdot|, (\cdot)^2, \sqrt{\cdot}, \le, \ge$ as entrywise operators on vectors.
We identify the transition probability function $\bP$ as a matrix in $\RR^{(\cS\times\cA)\times \cS}$ and each row $\bP(\cdot~|~s,a)\in \RR^{\cS}$ as a vector.
We denote $\bv$ as a vector in $\RR^{\cS}$ and $\bQ$ as a vector in $\RR^{\cS\times \cA}$.
Therefore $\bP\bv$ is a vector in $\RR^{\cS\times \cA}$.
We use $\sigma$ to denote strategy pairs and $\pi$ for the min-player or max-player strategy.
For any strategy $\sigma$, we define $\bQ_{\sigma}\in \RR^{\cS}$ as $\bQ_{\sigma}(s) \defeq \bQ(s,\sigma(s))$ for $\forall s\in \cS$.
We denote $\bP^{\sigma}$ as a linear operator defined as
\[
\forall s\in \cS:\quad
[\bP^{\sigma} \bv](s) = \bP(\cdot~|~s,\sigma(s))^\top \bv, \quad\text{and}\quad \forall s,a\in \cS\times \cA:
[\bP^{\sigma} \bQ](s,a) = \bP(\cdot~|~s,a)^\top  \bQ_{\sigma}.
\]


\paragraph{Min-value and max-value:} For a min-player strategy $\pi_{\min}$, we define its \emph{value} as
\begin{align}
\label{value:pimin}
\bv^{\pi_{\min}} := \max_{\pi_{\max} ~ : ~ \statemax \rightarrow \actions} \bv^{(\pi_{\min}, \pi_{\max})},
\end{align}
We let $\sigma_{\max}(\pi_{\min})$ denote a maximizing argument of the above and call it an \emph{optimal counter strategy} of $\pi_{\min}$. 
Thus a value of a min-player strategy gives his expected reward in the \emph{worst case}.
We say a min-player strategy $\pi_{\min}$ is \emph{$\epsilon$-optimal} if 
$$\bv^{\pi_{\min}} \le  \min_{\pi_{\min}' ~ : ~ \statemin \rightarrow \actions}\bv^{\pi_{\min}'} +\epsilon\cdot\one,
\quad \text{entrywisely}.
$$
The value and $\epsilon$-optimality for the max player is defined similarly. We denote by $\sigma^*$ the optimal strategy and by $\bv^*$ the value function of the optimal strategy.



\paragraph{$Q$-function:} For a strategy  $\sigma$, we denote its \emph{$Q$-function} 
(or \emph{action value}) as $\bQ^{\sigma}\in \RR^{\cS\times \cA}$ by 
$ \bQ^\sigma := \br + \gamma \bP \bv^{\sigma}.$ 
For a vector $\bv \in \RR^{\cS}$ we denote $\bQ(\bv):=\br + \gamma \bP \bv$.
Given a $\bQ \in  \RR^{\cS\times \cA}$, we denote the greedy value of $\bQ$ as
\[
V[\bQ] (s):= \min_{a\in \cA}~ \bQ(s,a)\quad\text{if} \quad s\in \cS_{\min}  \quad\text{and}\quad V[\bQ] (s):= \max_{a\in \cA}\quad \bQ(s,a)\quad\text{if}\quad s\in \cS_{\max}.
\]
 
\paragraph{Bellman Operator:} 
We denote the Bellman operator, $\cT$, as follows: $\cT[\bv]\in \RR^{\cS}$, 
and  
\begin{align*}
\cT[\bv](s) &:= V[\br+\gamma\bP \bv].
\end{align*}
We also denote the greedy strategy, $\sigma(\bv)$ or $\sigma(\bQ)$, as the maximization/minimization argument of the $\cT$ operator. Moreover, for a given strategy $\sigma$, we denote $\cT_{\sigma}[\bv] = \bQ(\bv)_{\sigma}$.
For a given min-player strategy $\pi_{\min}$, we define the \emph{half} Bellman operator $\cH_{\pi_{\min}}$ 
\[
\cH_{\pi_{\min}} [\bv] = \br(s,\pi_{\min}(s)) +\gamma \bP(\cdot~|~s,\pi_{\min}(s))^\top \bv \quad\text{if} \quad s\in \cS_{\min};  \qquad \cT[\bv](s)\quad\text{if}\quad s\in \cS_{\max}.
\]
We define $\cH_{\pi_{\max}}$ similarly.
Note that $\bv^*$ is  the unique fixed point of the Bellman operator, i.e., $\cT [\bv^*]= \bv^*$ (known as the Bellman equation \cite{bellman1957dynamic}).
Similarly, $\bv^{\pi_{\min}}$ (resp. $\bv^{\pi_{\max}}$) is the unique fixed point for $\cH_{\pi_{\min}}$ (resp. $\cH_{\pi_{\max}}$).
The (half) Bellman-operators satisfy the following properties (see. e.g. \cite{hansen2013strategy,puterman2014markov})
\begin{enumerate}
\item \emph{contraction}: $\|\cT [\bv_1] - \cT [\bv_2]\|_{\infty} \le \gamma \| \bv_1 -  \bv_2\|_{\infty}$;
\item \emph{monotonicity}: $\bv_1\le \bv_2 \Rightarrow \cT [\bv_1] \le \cT [\bv_2]$.
\end{enumerate} 

\paragraph{High Probability:} we say an algorithm has a property ``with high probability'' if for any $\delta$ by increasing the time and sample complexity by $O(\log(1/\delta))$ it has the property with probability $1 - \delta$. 

\subsection{Previous Work}
\label{sec:prev_work}

Here we provide a more detailed survey of previous works related to stochastic games and MDPs.
Two-person stochastic games generalize 
MDPs \cite{shapley1953stochastic}.  When one of the players has only one action to choose from, the problem reduces to a MDP. 
A related game is the stochastic game where both players choose their respective actions simultaneously at each state and the process transitions to the next state under the control of both players  \cite{shapley1953stochastic}.  The turn-based stochastic game can be reduced to the game with simultaneous moves \cite{prolat2015approximateDP}.


Computing an optimal strategy for a two-player turn-based zero-sum stochastic game is known to be in NP $\cap$ co-NP \cite{condon1992complexity}. 
Later \cite{hansen2013strategy} showed that the strategy iteration, a generalization of Howard's policy iteration algorithm \cite{howard1960dynamic}, solves the discounted problem in strongly polynomial time when the discount factor is fixed. Their work uses ideas from \cite{ye2011simplex} which proved that the policy iteration algorithm solves the discounted MDP (DMDP) in strongly polynomial time when the discount factor is fixed. In general (e.g., if the discount factor is part of the input size), it is open if stochastic games can be solved in polynomial time \cite{littman1996algorithms}. This is in contrast to MDPs which can be solved in (weakly) polynomial time as they are a special case of linear programming.



The algorithms and complexity theory for solving two-player stochastic games is closely related to that of solving MDPs. Their is vast literature on solving MDPs which dates back to Bellman who developed value iteration in 1957 \cite{bellman1957dynamic}. The policy iteration was introduced shortly after by Howard \cite{howard1960dynamic}, and its complexity has been extensive studied in \cite{mansour1999complexity,ye2011simplex,scherrer2013improved}. Then \cite{d1963probabilistic} and \cite{de1960problemes} discovered that MDPs are special cases 
of a linear program, which leads to the insight that the simplex method, when applied to solving DMDPs, is a 
simple policy iteration method. Ye \cite{ye2011simplex} showed that policy iteration (which is a variant of the general simplex method for linear programming) and the simplex method are strongly polynomial for DMDP and terminate in $O(|\cS|^2|\cA| (1-\gamma)^{-1} \log( |\cS| (1-\gamma)^{-1}))$ iterations. 
\cite{hansen2013strategy} and \cite{scherrer2013improved} improved the iteration bound to $O(|\cS||\cA| (1-\gamma)^{-1} \log(|\cS|(1-\gamma)^{-1}))$ 
for Howard's policy iteration method.
The best known convergence result for policy and strategy iteration are given by \cite{ye2005new} and \cite{hansen2013strategy}. The best known iteration complexities for both problems are of the order $(1-\gamma)^{-1}$, which becomes unbounded as $\gamma\to 1$. It is worth mentioning that \cite{ye2005new} designed a combinatorial interior-point algorithm (CIPA) that solves the DMDP in strongly polynomial time. 

Sample-based algorithms for learning value and policy functions for MDP have been studied in \cite{kearns1999finite, kakade2003sample, singh1994upper, azar2011speedy, azar2013minimax, sidford2018variance, sidford2018near, agarwal2019optimality} and many others.
Among these papers, \cite{azar2013minimax} obtains the first tight sample bound for finding an $\epsilon$-optimal value function and for finding $\epsilon$-optimal policies in a restricted $\epsilon$ regime and
\cite{sidford2018near} obtains the first tight sample bound for finding an $\epsilon$-optimal \emph{policy} for any $\epsilon$.
Both sample complexities are of the form  $\wt{O}[|\cS||\cA|(1-\gamma)^{-3}]$.
Lower bounds have been shown in \cite{azar2011reinforcement, even2006action} and \cite{ azar2013minimax}. 
\cite{azar2013minimax} give the first tight lower bound $\Omega[|\cS||\cA|(1-\gamma)^{-3}]$. For undiscounted average-reward MDP, a primal-dual based method was proposed in \cite{wang2017randomized} which achieves sample complexity $\wt{O}(|\cS||\cA| t_{\mathrm{mix}}^2 c_{\max}^2/ c_{\min}^2)$, where $t_{\mathrm{mix}}$ is the worst-case mixing time and $c_{\max} / c_{\min}$ is the ergodicity ratio. 
Sampling-based method for two-player stochastic game has been considered in \cite{wei2017online} in an online learning setting. 
However, their algorithm leads to a sub-optimal sample-complexity when generalized to the generative model setting.

 As for general stochastic games, the minimax Q-learning algorithm and the friend-and-foe Q-learning algorithm were introduced in \cite{littman1994markov} and \cite{littman2001friend}, respectively. The Nash Q-learning algorithm was proposed for zero-sum games in \cite{hu2003nash} and for general-sum games in \cite{littman2001value, hu1999multiagent}. 
\section{Technique Overview}
Since  stochastic games are a generalization of MDPs, many techniques for solving MDPs can be immediately generalized to stochastic games.
However, as we have discussed, 
some of the techniques used to achieve optimal sample complexities for solving MDPs in a generative model do not have a clear generalization to stochastic games. 
Nevertheless, we show how to design an algorithm that carefully extends particular Q-learning based methods, i.e. methods that always maintain an estimator for the optimal value function (or $\bQ^*$), to achieve our goals. 

\paragraph{Q-Learning: }%
 To motivate our approach we first briefly review previous Q-learning based methods and the core technique that achieves near-optimal sample complexity. 
To motivate Q-learning, we first recall the value iteration algorithm solving an MDP. Given a full model for the MDP value iteration updates the iterates as follows
\[
\bv^{(i)}\gets \cT[\bv^{(i-1)}]:= V[\bQ(\bv^{(i-1)})]
\]
where $\bv^{(0)}$ can be an arbitrary vector. 
Since the Bellman operator is contractive and $\bv^*$ is a fix point of $\cT$, this method gives an $\epsilon$-optimal value in $O[(1-\gamma)^{-1}\log(\epsilon^{-1})]$ iterations.
In the learning setting, $\cT$ cannot be exactly computed.
The Q-learning approach estimates $\cT$ by its approximate version, i.e., to compute $\bP(\cdot~|~s,a)^\top\bv^{(i-1)}$, we obtain samples from $\bP(\cdot~|~s,a)$, and then compute the empirical average. Then we compute the approximate Q-value at the $i$-th iteration as
\[
\bQ^{(i)}=\wh{\bQ}[\bv^{(i-1)}] := \br + \wh{\bP} \bv^{(i-1)} \quad\text{and} \quad 
\wh{\cT}(\bv^{(i-1)}) := V[\wh{\bQ}(\bv^{(i-1)})],
\]
where
\[
\wh{\bP}(\cdot~|~s,a)^\top \bv = \frac{1}{m}\sum_{s_i\sim P(\cdot|s,a),~ i\in [m]} \bv(s_i)
\]
for some $m > 0$.
Then the estimation error per step is defined as
\[
\bepsilon^{(i)} = {\bQ}[\bv^{(i-1)}] - \wh{\bQ}[\bv^{(i-1)}] . 
\]
Since the exact value iteration takes at least $\Omega[(1-\gamma)^{-1}]$ iterations to converge, 
the Q-learning (or approximated value iteration) takes at least $\Omega[(1-\gamma)^{-1}]$ iterations.
The total number of samples used over all the iterations is the sample complexity of the algorithm.

\paragraph{Variance Control and Monotonicity Techniques:}
To obtain the optimal sample complexity for one-player MDP, 
one approach is to carefully bound each entry of $\bepsilon^{(i)}$. 
By Bernstein inequality (\cite{azar2013minimax,sidford2018near, agarwal2019optimality}), 
we have, with high probability,
\[
|\bepsilon^{(i)}|\lesssim\sqrt{\var(\bv^{(i-1)})/m}
\le \sqrt{\var(\bv^{*})/m} + \text{ lower-order terms}.
\]
where $\var(\bv) = \bP \bv ^2 -(\bP \bv)^2$ is the \emph{variance-of-value} vector and ``$\lesssim $" means ``approximately less than."
Let $\pi^{(i)}$ be a policy maintained in the $i$-th iteration (e.g.  the greedy policy of the current Q-value).
Due to the estimation error $\bepsilon^{(i)}$, the per step error bound reads, 
\begin{align*}
	\bQ^* - \bQ^{(i)}
	\lesssim \gamma \bP^{\pi^*}\bQ^* - \gamma\bP^{\pi^{(i-1)}} \bQ^{(i-1)} + \bepsilon^{(i)}.
\end{align*}
To derive the overall error accumulation, 
\cite{sidford2018near} use the crucial  \emph{monotonicity} property, i.e., since $\pi^{(i-1)}(s)=\arg\max_{a} \bQ^{(i-1)}(s,a)$, we have
\begin{align}
	\label{eqn:monotone}
	\bQ^{(i-1)}(s,\pi^*(s)) \le \bQ^{(i)}(s,\pi^{(i-1)}(s)).
\end{align}
We thus have
\[
\bQ^* - \bQ^{(i)}
\lesssim \gamma\bP^{\pi^*}\bQ^* - \gamma\bP^{\pi^*} \bQ^{(i-1)} + \bepsilon^{(i)}.
\]
By induction, 
we have
\begin{align}
	\label{eqn:induction-mdp}
	\bQ^* - \bQ^{(i)} \le (I-\gamma\bP^{\pi^*})^{-1} \sqrt{\var(\bv^{*})/m} + \text{lower-order terms}.
\end{align}
The leading-order error accumulation term $(I-\gamma\bP^{\pi^*})^{-1} \sqrt{\var(\bv^{*})/m}$
satisfies the so-called \emph{total variance} property, and can be upper bounded uniformly by 
$\sqrt{(1-\gamma)^{-3}m^{-1}}$, resulting the correct dependence on $(1-\gamma)$.
Therefore the monotonicity property allows us to use $\pi^*$ as a \emph{proxy} policy, which carefully bounds the error accumulation.
For the additional subtlety of how to obtain an optimal policy, please refer to \cite{sidford2018near} for the variance reduction technique and the monotone-policy technique.

Similar observations regarding MDPs was used in \cite{agarwal2019optimality} as well.
This powerful technique, however, does not generalize to the game case due to the \emph{lack of monotonicity}.
Indeed, \eqref{eqn:monotone} does not hold for stochastic games due to the existence of both minimization and maximization operations in the Bellman operator. This is the critical issue which this paper seeks to overcome.

\paragraph{Finding Monotone Value-Strategy Sequences for Stochastic Games:} 
Analogously to the MDP case, one approach is to bound error accumulation for stochastic games is to bound each entry of the error vector $\bepsilon^{(i)}$ carefully.
In fact, our method for solving stochastic games is very much like the MDP method used in \cite{sidford2018near}.
However, the analysis is much different in order to resolve the difficulty introduced by the lack of monotonicity.

Since a stochastic game has two players, we modify the variance reduced Q-value iteration (vQVI)  method in \cite{sidford2018near} to obtain a min-player strategy and a max-player strategy respectively.
Since the two players are symmetric, let us focus on introducing and analyzing the algorithm for the min-player.
By a slight modification of the vQVI method, we can guarantee to obtain a sequence of strategies and values, $\{\bv^{(i)}, \bQ^{(i)}, \sigma^{(i)}, \bepsilon^{(i)}\}_{i=0}^{R}$, that satisfy, with high probability, 
\begin{align}
	\label{eqn:informal-mdvss}
\begin{aligned}
&\text{1.}\quad \bv^{(0)}\ge \bv^{(1)} \ge \ldots  \bv^{(R)} \ge \bv^*;\\
&\text{2.}\quad \cT_{\sigma^{(i)}}[ \bv^{(i)}] \le \bv^{(i)}, \cT [\bv^{(i)}]\le \bv^{(i)}, \cH_{\pi_{\min}^{(i)}} [\bv^{(i)}] \le \bv^{(i)};
\end{aligned}
\qquad
\begin{aligned}
&\text{3.}\quad \bQ^{(i)} \le \bQ[\bv^{(i-1)}] + \bepsilon^{(i)};\\
&\text{4.} \quad
\bv^{(i)}\le V[\bQ^{(i)}].
\end{aligned}
\end{align}
where $\sigma^{(i)}=(\pi_{\max}^{(i)},\pi_{\min}^{(i)})$.
The first property guarantees  that the value sequences are monotonically decreasing, the second property guarantees $\bv^{(i)}$ is always an upper bound of the value $\bv^{\pi_{\min}^{(i)}}$, and the third and fourth inequality guarantees that  $\bv^{(i)}$ is well approximated by $V[\bQ^{(i)}]$ and the estimation error satisfy
\[
|\bepsilon^{(i)}|\lesssim\sqrt{\var(\bv^{(i)})/m},
\]
where $m$ is the total number of samples used per state-action pair. 
Note that, as long as we can guarantee that $\bv^{(R)} - \bv^* \le \epsilon$, we can guarantee the min-strategy $\pi^{(R)}_{\min}$ is also good:
\[
\bv^{*}\le \bv^{\pi^{(R)}_{\min}} \le \bv^{(R)}.
\]

\paragraph{Controlling Error Accumulation using Auxiliary Markovian Strategy:}
Due to the lack of monotonicity \eqref{eqn:monotone}, we cannot use the optimal strategy $\sigma^*$ as a proxy strategy to carefully account for the error accumulation.
To resolve this issue, we construct a new proxy strategy $\ms$. This strategy is a Markovian strategy, which is time-dependent but not history dependent, i.e., at time $t$, the strategy played is a deterministic map $\ms_{t}:\cS\rightarrow\cA$.
The proxy strategy satisfies the following: 
\begin{enumerate}
	\item \textbf{Underestimation.} its value, $\bv[\ms_{i}]$, (expected discounted cumulative reward starting from any time) is upper bounded by $\bv^*$;
	\item \textbf{Contraction.} $\bv^{(i)}(s) - \bv[\ms_{i}](s)
	\le \gamma \bP(\cdot|s,\ms_{i}(s))^{\top}\big(\bv^{(i-1)}- \bv[\ms_{i-1}]\big)  + \bepsilon^{(i)}(s,\ms_{i}(s))$, 
\end{enumerate}
Similarly, we can bound the error $\bepsilon^{(i)}(s,\ms_{i}(s))$ by the variance-of-value of the proxy strategy
\[
\bepsilon^{(i)}(s,\ms_{i}(s))
\le \sqrt{\var(\bv[\ms_{i}])(s,\ms_{i}(s))/m} + \text{ lower-order terms}.
\]
Based on the first property, we can upper bound
\[
\bv^{(i)} - \bv^{*} \le \bv^{(i)} - \bv[\ms_{i}].
\]
Based on the second property, and induction on $i$, we can now write a new form of error accumulation,
\[
\bv^{(R)} - \bv^*\lesssim \sum_{i=1}^{R}\gamma^{R-i}  
\bP^{\ms_{R}} \cdot \bP^{\ms_{R-1}} \cdot \ldots \cdot \bP^{\ms_{i+1}}\sqrt{\var(\bv[\ms_{i-1}])_{\ms_{i}}/m}+ \text{ lower-order terms},
\]
where $\var(\bv[\ms_{i-1}])_{\ms_{i}}(s) :=\var(\bv[\ms_{i}])(s,\ms_{i}(s))$ for all $s\in \cS$.
We derive 
a new \emph{law of total variance} bound for the first term and ultimately prove an error accumulation upper bound:
\[
\bv^{(R)} - \bv^*\lesssim \sqrt{(1-\gamma)^{-3}m}+ \text{ lower-order terms},
\]
giving the optimal sample bound.

\section{Sample Complexity of Stochastic Games}
In this section, we provide and analyze our sampling-based algorithm for solving stochastic games. Recall that we have a \emph{generative model} for the game such that we can obtain 
samples from state-action pairs.
 Each sample is obtained in time $O(1)$. As such we care about the total number of samples used or the total amount of time consumed by the algorithm.
We will provide an efficient algorithm that takes input a generative model and obtains a good strategy for the underlying stochastic game.

We now describe the algorithm.
Since the min-player and max-player are symmetric, let us focus on the min-player strategy. 
For the max player strategy, we can either consider the game $\cG'=(\cS_{\min}, \cS_{\max}, \bP, \one-\br, \gamma)$, in which the roles of the max and min players switched, or use the corresponding algorithm for the max-player defined in Section~\ref{sec:alg}, an algorithm that is a direct generalization from the min-player algorithm.

\begin{algorithm}[htb!]
	\caption{
		QVI-MDVSS:
		\label{alg-halfErr} 
		algorithm for computing monotone decreasing value-strategy sequences.
	}
	{\small
		\begin{algorithmic}[1]
			\State 
			\textbf{Input:} A generative model for stochastic game, $\cM=(\cS, \cA, \br, \bP, \gamma)$;
			\State
			\textbf{Input:} Precision parameter $u\in[0,(1-\gamma)^{-1}]$,
			and error probability $\delta \in (0, 1)$;
			\State 
			\textbf{Input:}
			Initial values $\bv^{+(0)}, \sigma^{+(0)}$ that satisfies monotonicity:
			{\small
				\vspace{-2mm} 
				\begin{align}
					\label{eqn:input-condition}
					\bv^*\le \bv^{+(0)} \le \bv^* +u\one,\quad \bv^{+(0)}\ge \cT [\bv^{+(0)}],\quad\text{and} \quad \bv^{+(0)}\ge \cT_{\sigma^{+(0)}} [\bv^{+(0)}];
				\end{align}
				\vspace{-4mm}
			}
			\State\textbf{Output:}
			$\{\bv^{+(i)}, \bQ^{+(i)}, \sigma^{+(i)}, \bxi^{+(i)} \}_{i=0}^{R}$ which is an MDVSS with probability at least $1-\delta$;
			\State
			\State\textbf{INITIALIZATION:} 
			\State Let $c_1, c_2, c_3, c$ be some tunable absolute constants;
			\State \textcolor{OliveGreen}{\emph{\textbackslash \textbackslash Initialize constants:}}
			\State \qquad$\beta\gets (1-\gamma)^{-1}$, and $R\gets\lceil c_1\beta\ln[\beta u^{-1}]\rceil$; 
			\State \label{def:m1}\qquad$m_1 \gets{c_2\beta^3\cdot\min(1,u^{-2})\cdot{\log(8|\cS||\cA|\delta^{-1})} }{}$; 
			\State  \qquad$m_2\gets {c_3\beta^{2}\log[2R|\cS||\cA|\delta^{-1}]}$;
			\State\qquad $\alpha_1\gets L/m_1$ where $L = c\log(|\cS||\cA|\delta^{-1}(1-\gamma)^{-1}u^{-1})$; 
			\State
			\textcolor{OliveGreen}{\emph{\textbackslash \textbackslash Obtain an initial batch of samples:}}
			\State For each  $(s, a)\in \cS\times\cA$:
			obtain independent samples $s_{s,a}^{(1)}, s_{s,a}^{(2)}, \ldots, s_{s,a}^{(m_1)}$ from $\bP(\cdot | s,a)$;
			\State
			Initialize: 
			$\bw^{+}=\wt{\bw}^+ = \wh{\bsigma}^+=\bQ^{+(0)}=\bQ^{+(1)} \gets \beta \cdot {\bf 1}_{\cS \times \cA}$ and $i\gets 0$;	
			\For{each $(s, a)\in \cS\times\cA$} 
			\State\label{def:start-init-compute} \textcolor{OliveGreen}{\emph{\textbackslash \textbackslash Compute empirical estimates of $\bP_{s,a}^{\top}\bv^{+(0)}$ and $\var({\bv^{+(0)}})(s,a)$:}}
			\State 		\label{alg1: compute w1+} 
			$\wt{\bw}^{+}(s,a) \gets \frac{1}{m_1} 
			\sum_{j=1}^{m_1} \bv^{+ (0)}(s_{s,a}^{(j)})$;
			
			\State  
			\label{def:empricial-variance}
			$\wh{\bsigma}^{+}(s,a)\gets \frac{1}{m_1} \sum_{j=1}^{m_1}(\bv^{+ (0)})^2(s_{s,a}^{(j)}) - (\wt{\bw}^{+})^2(s,a)$ ;
			
			\State \textcolor{OliveGreen}{\emph{\textbackslash \textbackslash Shift the empirical estimate to have one-sided error and guarantee monotonicity:}} 
			\State
			$\bw^+(s, a) \gets \wt{\bw}^+(s,a) + \sqrt{\alpha_1\wh{\bsigma}^+(s,a)} + \alpha_1^{3/4}\beta$
			\State \textcolor{OliveGreen}{\emph{\textbackslash \textbackslash Compute coarse estimate of the  $Q$-function and make sure its value is in $[0,\beta]$:}}
			\State
			$\bQ^{+(0)}(s,a) \gets \min[\br(s,a) + \gamma \bw^{+}(s,a), \beta]$\label{def:end-init-compute} 

			\EndFor
			\State
			\State\textbf{REPEAT:} \textcolor{OliveGreen}{\emph{\qquad\qquad\textbackslash \textbackslash successively improve}}
			\For{$i=1$ to $R$}
			\State  \textcolor{OliveGreen}{\emph{\textbackslash \textbackslash Compute
					the one-step dynamic programming:}}
			\State\label{alg: pit} Let ${\bv}^{+(i)} \gets \wt{\bv}^{+(i)}\gets \cT [ \bQ^{+(i-1)} ]$, ${\sigma}^{+(i)}\gets\wt{\sigma}^{+(i)}\gets \sigma(\bQ^{+(i-1)})$;
			\State \textcolor{OliveGreen}{\emph{\textbackslash \textbackslash Compute strategy and value and maintain monotonicity:}} 
			\State \label{alg: v2+} For {each $s\in \cS$} if ${\bv}^{+(i)}(s)\ge \bv^{+(i-1)}(s)$, then 
			$\bv^{+(i)}(s)\gets \bv^{+(i-1)}(s)$ and $\sigma^{+(i)}(s)\gets \sigma^{+(i-1)}(s)$;
			
			\State\textcolor{OliveGreen}{\emph{\textbackslash \textbackslash Obtaining a small batch of samples:}}
			\State For each $(s, a)\in \cS\times\cA$:
			draw independent samples $\wt{s}_{s,a}^{(1)}, \wt{s}_{s,a}^{(2)}, \ldots, \wt{s}_{s,a}^{(m_2)}$ from $\bP(\cdot | {s,a})$;
			\State \textcolor{OliveGreen}{\emph{\textbackslash \textbackslash Compute the expected value,  $\bg^{\pm(i)}$, the estimate of $\bP \big[\bv^{\pm(i)}  - \bv^{\pm (0)}\big]$ with one-sided error:}}
			\State  \label{alg1: compute g} Let $\wt{\bg}^{+(i)}(s,a)\gets {\frac{1}{m_2}} \sum_{j=1}^{m_2} \big[\bv^{+ (i)}(\wt{s}_{s,a}^{(j)}) - \bv^{+(0)}(\wt{s}_{s,a}^{(j)}) \big]$;
			\State Let ${\bg}^{+(i)}(s,a)\gets\wt{\bg}^{+(i)}(s,a)+ C(1-\gamma)u$, where $C>0$ is an absolute constant;
			\State \textcolor{OliveGreen}{\emph{\textbackslash \textbackslash Estimate the approximation error:}}
			\State $\bxi^{+ (i)}\gets 2\sqrt{\alpha_1 \bsigma_{\bv^{+ (0)}}} +
			2[\alpha_1^{3/4}\beta + C(1-\gamma)u]\cdot \one$
			\State  \textcolor{OliveGreen}{\emph{\textbackslash \textbackslash Improve $\bQ^{+ (i)}$  and make sure its value is in $[0,\beta]$:}}
			\State \label{alg: q-func} $\bQ^{+ (i+1)}\gets \min\Big[\br + \gamma\cdot[\bw^{+}+\bg^{+ (i)}], \beta\Big]$;
			\EndFor
			\State \textbf{return} 
			$\{\bv^{+(i)}, \bQ^{+(i)}, \sigma^{+(i)}, \bxi^{+ (i)}\}_{i=0}^{R}$
		\end{algorithmic}
	}
\end{algorithm}
\paragraph{The Full Algorithm.}
For simplicity, let us denote $\beta = 1/(1-\gamma)$.
Our full algorithm will use the QVI-MDVSS algorithm (Algorithm~\ref{alg-halfErr}) as a subroutine.
As we will show shortly, this subroutine maintains a monotonic value strategy sequence with high probability.
Suppose the algorithm is 
specified by an accuracy parameter $\epsilon\in(0,1]$.
We initialize a value vector $\bv^{(0)} = \beta\one$, and an arbitrary strategy $\sigma^{(0)}=(\pi_{\min}^{(0)}, \pi_{\max}^{(0)})$.
Let $u^{(0)} = \beta$.
Then our initial value and strategy satisfy the  requirement of the input specified by Algorithm~\ref{alg-halfErr}:
\begin{align*}
	\bv^*\le \bv^{(0)} \le \bv^* +u^{(0)}\one,\quad \bv^{(0)}\ge \cT [\bv^{(0)}],\quad\text{and} \quad \bv^{(0)}\ge \cT_{\sigma^{(0)}} [\bv^{(0)}];
\end{align*}
Let $u^{(j)}\gets \beta / 2^j$ and $\delta\gets 1/\poly(\log(\beta/\epsilon))$.
We  run Algorithm \ref{alg-halfErr} repeatedly:\\
\fbox{
\centering	
\parbox{0.90\textwidth}{
\begin{align}
	\label{algorithm-full}
(v^{(j+1)}, \sigma^{(j+1)})\gets\textrm{QVI-MDVSS}\gets (v^{(j)}, \sigma^{(j)}, u^{(j)}, \delta),
\end{align}
}}\\
where $\sigma^{(j)}=(\pi_{\min}^{(j)}, \pi_{\max}^{(j)})$ and we take the terminal value and strategy of the output sequence of Algorithm~\ref{alg-halfErr} as the input for the next iteration.
In total we run \eqref{algorithm-full} $R'=\Theta(\log(\beta/\epsilon))$ iterations.
In the end, we output $\pi^{(R')}_{\min}$ from $\sigma^{(R')}=(\pi_{\min}^{(R')}, \pi_{\max}^{(R')} )$ as our min-player strategy.

The formal guarantee of the algorithm is presented in the following theorem.
\begin{theorem}[Restatement of Theorem~\ref{mainthm}]
	\label{mainthm2}
	Given a stochastic game $\cG =(\cS_{\min}, \cS_{\max}, \bP, \br, \gamma)$ with a generative model, there exists (constructively) an algorithm  that outputs, with probability at least $1-\delta$,
	an $\epsilon$-optimal strategy $\sigma$ by querying 
	$Z=\wt{O}(|\cS| |\cA| (1-\gamma)^{-3} \epsilon^{-2})$
	samples in time $O(Z)$ using space $O(|\cS| |\cA|)$ where $\epsilon\in(0,1)$ and $\wt{O}(\cdot)$ hides $\poly\log[ |\cS||\cA|/(1-\gamma)/\epsilon/\delta]$ factors.
\end{theorem}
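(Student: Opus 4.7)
The plan is to first establish that each call to QVI-MDVSS produces, with high probability, an MDVSS satisfying the four properties in \eqref{eqn:informal-mdvss}, then use an auxiliary Markovian proxy strategy to show that the additive slack halves per outer call, and finally iterate QVI-MDVSS $R' = O(\log(\beta/\epsilon))$ times while tracking the sample budget, where $\beta = 1/(1-\gamma)$.

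First, I would verify the MDVSS guarantee. Monotonicity (property 1) and the one-sided Bellman inequalities (property 2) are enforced structurally: Line~\ref{alg: v2+} clamps $\bv^{+(i)}$ entrywise by $\bv^{+(i-1)}$ whenever the new value would increase, and an induction combined with the input hypothesis \eqref{eqn:input-condition} propagates the inequalities through successive one-step dynamic programming updates. The randomized content is concentrated in the shifted estimators $\bw^+$ and $\bg^{+(i)}$: applying Bernstein's inequality with $m_1 = \wt{\Theta}(\beta^3\min(1,u^{-2}))$ samples gives $|\wh\bP(\cdot\mid s,a)^\top\bv^{+(0)} - \bP(\cdot\mid s,a)^\top\bv^{+(0)}| \le \sqrt{\alpha_1\bsigma^+(s,a)} + \alpha_1^{3/4}\beta$ with probability at least $1 - \delta/(|\cS||\cA|)$, and the explicit positive shifts in the definition of $\bw^+$ then make it a valid one-sided upper estimate of $\bP\bv^{+(0)}$. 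A second Bernstein bound on the $m_2$-sample estimator $\wt\bg^{+(i)}$ for $\bP(\bv^{+(i)}-\bv^{+(0)})$, together with the shift $C(1-\gamma)u$, yields properties 3 and 4 with per-step error bounded exactly by $\bxi^{+(i)}$ recorded by the algorithm.

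Next, I would run the error-accumulation analysis sketched in the technique overview. I construct the auxiliary Markovian proxy $\ms = (\ms_1, \ldots, \ms_R)$ by a reverse recursion on $i$ so that $\bv[\ms_i] \le \bv^*$ (underestimation) and
\[
\bv^{+(i)}(s) - \bv[\ms_i](s) \le \gamma\,\bP(\cdot\mid s,\ms_i(s))^\top\bigl(\bv^{+(i-1)} - \bv[\ms_{i-1}]\bigr) + \bepsilon^{+(i)}(s,\ms_i(s))
\]
both hold; MDVSS properties 2--4 are exactly what is needed to justify the inductive choice of $\ms_i(s)$ at both min- and max-states without invoking \eqref{eqn:monotone}. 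Unrolling the recursion gives
\[
\bv^{+(R)} - \bv^* \le \sum_{i=1}^{R}\gamma^{R-i}\,\bP^{\ms_R}\cdots\bP^{\ms_{i+1}}\,\bepsilon^{+(i)}_{\ms_i} + \gamma^R(\bv^{+(0)} - \bv^*),
\]
where $\bepsilon^{+(i)}_{\ms_i}(s) := \bepsilon^{+(i)}(s,\ms_i(s))$. Substituting $\bepsilon^{+(i)}_{\ms_i} \lesssim \sqrt{\alpha_1\var(\bv[\ms_{i-1}])_{\ms_i}} + \alpha_1^{3/4}\beta\one + C(1-\gamma)u\one$, applying Cauchy--Schwarz, and establishing a law of total variance for the non-stationary Markovian sequence,
\[
\sum_{i=1}^R\gamma^{R-i}\,\bP^{\ms_R}\cdots\bP^{\ms_{i+1}}\,\var(\bv[\ms_{i-1}])_{\ms_i} \le O(\beta^3)\,\one,
\]
yields the entrywise bound $\bv^{+(R)} - \bv^* \le (u/2)\one$ once $m_1, m_2, R$ are instantiated with the constants in the algorithm.

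Finally, I would stitch the outer loop. With $\bv^{(0)} = \beta\one$ and $u^{(0)} = \beta$, each of the $R' = \Theta(\log(\beta/\epsilon))$ outer calls halves the slack, so after $R'$ calls the output satisfies $\bv^* \le \bv^{(R')} \le \bv^* + \epsilon\one$ with probability at least $1 - R'\delta$. Combined with MDVSS property 2 applied to the returned strategy, this gives $\bv^{\pi_{\min}^{(R')}} \le \bv^{(R')} \le \bv^* + \epsilon\one$, which is the claimed $\epsilon$-optimality for the min-player; running the symmetric procedure (or the game with roles swapped) for the max-player completes the strategy pair. For the sample budget, the $j$-th outer call draws $|\cS||\cA|(m_1 + R\cdot m_2) = \wt O(|\cS||\cA|\beta^3(u^{(j)})^{-2})$ samples, and the geometric series $\sum_{j=0}^{R'-1} 4^j/\beta^2$ is dominated by its last term, yielding $\wt O(|\cS||\cA|\beta^3\epsilon^{-2})$ samples in total. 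Each sample is consumed in $O(1)$ amortized time and only $O(|\cS||\cA|)$-sized vectors are stored, so the running time is $O(Z)$ and the space is $O(|\cS||\cA|)$; rescaling $\delta \to \delta/R'$ preserves the polylogarithmic factors hidden in $\wt O(\cdot)$. The main obstacle is the proxy law-of-total-variance bound, which substitutes for the single-player monotonicity identity \eqref{eqn:monotone} that is unavailable in the two-player setting; everything else reduces to algorithmic bookkeeping and a standard Bernstein-plus-union-bound concentration argument.
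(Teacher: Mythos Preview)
Your proposal is correct and follows the paper's proof essentially step for step: the same three-layer decomposition (Proposition~\ref{mainprop} that QVI-MDVSS outputs an MDVSS via Bernstein-plus-shifts, Proposition~\ref{mainprop0} that the auxiliary Markovian proxy together with Cauchy--Schwarz and a law of total variance yields the halving bound, and Proposition~\ref{prop:main0} plus the outer geometric loop to finish). One small slip to fix when you write it up: after Cauchy--Schwarz the weight becomes $\gamma^{2(R-i)}$, and the law of total variance (Lemma~\ref{lem:variance}) then identifies $\sum_i \gamma^{2(R-i)}\bP^{\ms_R}\cdots\bP^{\ms_{i+1}}\var(\bv[\ms_{i-1}])_{\ms_i}$ with the variance of a discounted cumulative reward, hence $O(\beta^2)$ rather than the $O(\beta^3)$ you displayed with linear weight $\gamma^{R-i}$; the looser $O(\beta^3)$ bound would cost you an extra factor of $\beta$ in $m_1$ and miss the claimed sample complexity.
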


The formal proof of Theorem \ref{mainthm2} is given in the next section. 
Here we give a sketch of the proof.

\paragraph{Proof Sketch of Theorem \ref{mainthm2}:} 
We first show the high-level idea. Considering one iteration of \eqref{algorithm-full},
we claim that if the input value and strategy $\sigma^{(j)}, \bv^{(j)}, u^{(j)}$ satisfies the input condition \eqref{eqn:input-condition}, then  with probability at least $1-\delta$, the terminal value and strategy of the output sequence,
$\sigma^{(j+1)}, \bv^{(j+1)}$, satisfies,
\begin{align}
\label{eqn:half-err}
\bv^{\pi_{\min}^{j+1}}\le  \bv^{j+1}\le \bv^* + u^{(j)} \one/2 =: \bv^* + u^{(j+1)} \one;
\end{align}
and  $(\sigma^{(j+1)}, \bv^{(j+1)}, u^{(j+1)})$ satisfies the the input condition \eqref{eqn:input-condition}.
Namely, with high probability, the error of the output is decreased by at least half and the output can be used as an input to the QVI-MDVSS algorithm again.
Suppose we run the subroutine of Algorithm~\ref{alg-halfErr} for $R'$ times, and conditioning on the event that all the instances of QVI-MDVSS succeed, the final error of $\pi_{\min}^{(R')}$ is then at most $u^{(R')} = 2^{-R'}\beta = \epsilon$, as desired.
By setting $\delta = \delta'/R'$ for some $\delta'>0$, we have that all QVI-MDVSS instances succeed with probability at least $1-\delta'$.
It remains to show that the algorithm QVI-MDVSS works as claimed.

\emph{High-level Structure of Algorithm~\ref{alg-halfErr}.}
To outline the proof, 
we denote a \emph{monotone decreasing value-strategy sequence} (MDVSS) as $\{\bv^{(i)}, \bQ^{(i)}, \sigma^{(i)}, \bepsilon^{(i)}\}_{i=0}^{R}$,  satisfying \eqref{eqn:informal-mdvss}, where $\bv^{(i)},\bepsilon^{(i)} \in \RR^\cS, \bQ^{(i)}\in \RR^{\cS\times \cA}$ and $\sigma^{(i)} = (\pi_{\min}^{(i)}, \pi_{\max}^{(i)})\in \cA^\cS$.
A more formal treatment of the sequence is presented in Section~\ref{sec:monotone_sequence}.

We next introduce the high-level idea of Algorithm~\ref{alg-halfErr}.
The basic step of the algorithm is to do approximate value-iteration while preserving all monotonic properties required by an MDVSS, i.e.,
we would like to approximate 
\[
\bQ^{(i)}={\bQ}[\bv^{(i-1)}] := \br + {\bP} \bv^{(i-1)} \quad\text{and} \quad 
{\cT}[\bv^{(i-1)}] := V[{\bQ}(\bv^{(i-1)})].
\]
We would like to approximate ${\bP} \bv^{(i-1)}$ using samples,
but we do not want to use the same amount of samples per iteration (as it become costly if the number of iterations is large). 
Instead, we compute only the \emph{first} iteration (i.e., estimate $\bP \bv^{(0)}$) up to high accuracy with a large number of samples ($m_1$ samples, defined in Line~\ref{def:m1}).  
These computations are presented in Line~\ref{def:start-init-compute}-\ref{def:end-init-compute}.
To maintain an upper bound of the of the estimation error, we also compute the empirical variances of the updates in Line~\ref{def:empricial-variance}.
We shift upwards our estimates by the estimation error upper bounds to make our estimators one-sided, which is crucial to maintain the MDVSS properties.
For the subsequent steps (Line~\ref{alg: pit} - \ref{alg: q-func}), we use $m_2$ samples per iteration ($m_2\ll m_1$) to  
estimate $\bP(\bv^{(i)} - \bv^{(0)})$.
The expectation is that $(\bv^{(i)} - \bv^{(0)})$ has a small $\ell_\infty$ norm, and hence $\bP(\bv^{(i)} - \bv^{(0)})$ can be estimated up to high accuracy with only a small number of samples. 
The estimator of $\bP(\bv^{(i)} - \bv^{(0)})$ plus the estimator of $\bP \bv^{(0)}$ in the initialization steps gives a high-accuracy estimator (Line~\ref{alg: q-func}) for the value iteration.
Since $m_2\ll m_1$, the total number of samples per state-action pair is dominated by $m_1$.
This idea is formally known as \emph{variance-reduction}, firstly proposed for solving MDP in \cite{sidford2018variance}.
Similarly, we shift our estimators to be one-sided.
We additionally maintain carefully-designed strategies in Line~\ref{alg: pit}-\ref{alg: v2+} to preserve monotonicity. 
Hence the algorithm can be viewed as a value-strategy iteration algorithm.


\emph{Correctness of Algorithm~\ref{alg-halfErr}.}
We now sketch the proof of  correctness for Algorithm~\ref{alg-halfErr}.
Firstly Proposition \eqref{prop:main0} shows that the if an MDVSS, e.g., $\{\bv^{+(i)},\bQ^{+(i)}, \sigma^{+(i)}, \bepsilon^{+(i)}\}_{i=0}^R$,
	satisfies  $\|\bv^{+(R)}- \bv^*\|_{\infty} \le \epsilon$ for some $\epsilon>0$ then
	 their terminal strategies and values  satisfy
	\[
	  \bv^{\pi_{\min}^{+(R)}}\le  \bv^{+(R)}\le \bv^* + \epsilon \one.
	\]
	This indicates that as long as we can show $\epsilon \le u/2$, then the \emph{halving-error-property} \eqref{eqn:half-err} holds.
	
%
	Proposition~\ref{mainprop0} shows the halving-error-property can be achieved by setting 
	\[
	\bepsilon^{+(i)}
	\lesssim \sqrt{\var(\bv^{+(0)})/m} + \text{ lower-order terms},
	\]
	where $\var(\bv^{+(0)})$ is the variance-of-value vector of $\bv^{+(0)}$ and $m \gtrsim \sqrt{\beta^3 u^{-2}}$.
	This proof is based on constructing an auxiliary Markovian strategy for analyzing the error accumulation throughout the value-strategy iterations. The Markovian strategy is a time-dependent strategy used as a proxy for analyzing the entrywise error recursion (Lemmas \ref{mainprop0}-\ref{cor:error accur}). 

Proposition~\ref{mainprop} shows, with high probability, Algorithm~\ref{alg-halfErr} produces value-strategy sequences
$\{\bv^{+(i)}, \bQ^{+(i)}, \sigma^{+(i)}, \bxi^{+(i)} \}_{i=0}^{R}$, which is indeed an MDVSS and $\bxi^{+(i)}$ satisfies Proposition~\ref{mainprop0}. The proof involves analyzing the probability of ``good events" on which monotonicity is preserved at every iteration by using confidence estimates computed during the iterations and concentration arguments. See Lemmas \ref{def:events}-\ref{lemma:all good events happend prob} for the full proof of Proposition \ref{mainprop}. 

\emph{Putting Everything Together.} Finally by putting together the strategies, we conclude that the terminal strategy of the iteration~\eqref{algorithm-full} is always an approximately optimal min-player strategy to the game, with high probability.
For implementation, since our algorithm only computes the inner product based on samples, the total computation time is proportional to the number of samples.
Moreover, since we can update as samples are drawn and output the monotone sequences as they are generated, we do not need to store samples or the value-strategy sequences, thus the overall space is $O(|\cS||\cA|)$.
\qed

\section{Proof of Main Results}
The remainder of this section is devoted to proving Theorem~\ref{mainthm}. We prove this by formally providing a notion of \emph{monotone value-strategy sequences}.
With this, we show if an algorithm outputs some monotone value-strategy sequence, then the terminal strategy of the sequence is always an approximately optimal strategy to the game.
We then show that Algorithm~\ref{alg-halfErr}
produces {monotone value-strategy sequences} with high probability.


\subsection{Additional Notation}

First we  provide additional notation critical to our proofs.

\paragraph{Markovian Strategies:} 
We denote a Markovian strategy $\ms$ as an infinitely long sequence of pre-defined strategies \[\ms:= (\sigma_1, \sigma_2, \ldots),
\] where each $\sigma_i$ is a normal deterministic strategy.
We denote 
\[
\ms_{t} = (\sigma_t, \sigma_{t+1}, \ldots)
\]
as another Markovian strategy.
We denote $\ms_{\min}$ and $\ms_{\max}$ as the min-player strategy and the max-player strategy respectively.
When using the strategy, players uses $\sigma_t$ at time $t$.
The strategy is Markovian because it does not depend on the historical moves. 
Note that a stationary strategy $\sigma$ is a special case of the Markovian strategy: $\sigma = (\sigma, \sigma, \sigma,\ldots )$. 
The value of a Markovian strategy is defined as before, but the states are generated by playing the action $\sigma_t(s^t)$ at time $t$.
Since the strategy has a time dependence, we denote
\[
\bv_t^{\ms}:=\bv[\ms_{t}] \quad
\text{and}\quad
\bQ^{\ms}_t = \br + \gamma \bP \bv_{t+1}^{\ms}. 
\]
The (half) Bellman operators are defined similarly to that of stationary policies.

\subsection{Monotone Value-Strategy Sequence}
\label{sec:monotone_sequence} 
In this section we formally define monotone strategy value sequences.
Such a sequence, although not explicitly stated in \cite{sidford2018variance, sidford2018near}, are crucial for these algorithms to obtain good policy while obtaining a good value for an MDP.  
In the following sections, we denote 
 $m\ge 1$, $L\ge 1$ and $\epsilon\in[0, (1-\gamma)^{-1}]$ as parameters.
{Monotone value-strategy sequences} are formally defined as follows.
\begin{definition}[Monotone Decreasing Value-Strategy Sequence]
	\label{def:mdvps}
	A \emph{monotone decreasing value-strategy sequence} (MDVSS) is a sequence of $\{\bv^{(i)}, \bQ^{(i)}, \sigma^{(i)}, \bepsilon^{(i)}\}_{i=0}^{R}$ where $\bv^{(i)},\bepsilon^{(i)} \in \RR^\cS, \bQ^{(i)}\in \RR^{\cS\times \cA}$ and $\sigma^{(i)} = (\pi_{\min}^{(i)}, \pi_{\max}^{(i)})\in \cA^\cS$ satisfy 
	\begin{enumerate}
		\item $\bv^{(0)}\ge \bv^{(1)} \ge \ldots  \bv^{(R)} \ge \bv^*$;
		\item $\forall i\in[0,R]$, $\cT_{\sigma^{(i)}}[ \bv^{(i)}] \le \bv^{(i)}, \cT [\bv^{(i)}]\le \bv^{(i)}, \cH_{\pi^{(i)}_{\min}}[ \bv^{(i)}]\le \bv^{(i)}$; 
		\item $\forall i\in[R]$, $\bQ^{(i)} \le \br + \gamma \bP \bv^{(i-1)} + \bepsilon^{(i)}$; 
		\item $\forall i\in[R]$,  $\bv^{(i)}\le V[ \bQ^{(i)}]$. 
	\end{enumerate}
	Note that $\bQ^{(0)}, \bepsilon^{(0)}$ can be arbitrary.
\end{definition}
Here, we explain the intuition of the sequence.
The first property guarantees that the value-estimator $\bv^{(i)}$s always upper bound the optimal value.
The second property guarantees that
$\bv^{\pi_{\min}}\le \bv^{(i)}$.
Indeed 
\[
\bv^{\pi_{\min}} = \lim_{t\rightarrow \infty}
\cH_{\pi_{\min}}^{t}[ \bv^{(i)}] \le \bv^{(i)},
\]
where $\cH_{\pi_{\min}}^{t}$ denotes applying $\cH_{\pi_{\min}}$ for $t$ times. 
Therefore, as long as $\bv^{(R)} - \bv^* \le \epsilon\one$, we have
\[
\bv^*\le \bv^{\pi_{\min}} \le \bv^*  + \epsilon\one.
\]
The third and the fourth property guarantees $\bv^{(R)}$ is good by requiring that $\bv^{(i)}$ and $\bQ^{(i)}$ satisfy the approximate value iteration with one-sided error. However the overall error $\bv^{(R)} - \bv^* $ is controlled by the per-step error term $\bepsilon^{(i)}$.

Similarly, we define \emph{monotone increasing value-strategy sequence}(MIVSS) analagously with every inequality reversed.
\begin{definition}[Monotone Increasing Value-Strategy Sequence]
	\label{def:mivps}
	A \emph{monotone increasing value-strategy sequence} (MIVSS) is a sequence of $\{\bv^{(i)}, \bQ^{(i)}, \sigma^{(i)}, \bepsilon^{(i)}\}_{i=0}^{R}$ where $\bv^{(i)}\in \RR^\cS, \bQ^{(i)}\in \RR^{\cS\times \cA}$ and $\sigma^{(i)}\in \cA^\cS$ that satisfies,
	\begin{enumerate}
		\item $\bv^{(0)}\le \bv^{(1)} \le \ldots  \bv^{(R)} \le  \bv^*$;
		\item $\forall i\in[0,R]$,  $\cT_{\sigma^{(i)}} [\bv^{(i)}] \ge \bv^{(i)}, \cT \bv^{(i)}\ge [\bv^{(i)}], \cH_{\pi_{\max}^{(i)}}[\bv^{(i)}]\ge \bv^{(i)}$;
		\item $\forall i\in[R]$, $\bQ^{(i)} \ge \br + \gamma \bP \bv^{(i-1)} - \bepsilon^{(i)}$;
		\item $\forall i\in[R]$, $\bv^{(i)}\ge V [\bQ^{(i)}]$.
	\end{enumerate}
	Note that $\bQ^{(0)}, \bepsilon^{(0)}$ can be arbitrary.
\end{definition}

\subsection{Monotone Value-Strategy Sequence Implies Good Strategy}
Next, we show that MDVSS or MIVSS implies a good terminal value/strategy.
First we show that if the terminal value $\bv^{(R)}$ is close to the optimal value, then we are  guaranteed to have good strategies as well.

\begin{proposition}
	\label{prop:main0}
	Suppose we have an MDVSS, $\{\bv^{(i)},\bQ^{(i)}, \sigma^{(i)}, \bepsilon^{(i)}\}_{i=0}^R$,
	with $\|\bv^{(R)}- \bv^*\|_{\infty} \le \epsilon$ for some $\epsilon\ge 0$.
	Then we have
	\[
	\bv^{\pi_{\min}^{(R)}}\le \bv^* + \epsilon \one.
	\]
	Similarly, suppose $\{\bv^{(i)},\bQ^{(i)}, \sigma^{(i)}, \bepsilon^{(i)}\}_{i=0}^R$ is an MIVSS, then
	\[
	\bv^{\pi_{\max}^{(R)}}\ge \bv^* - \epsilon \one.
	\]
\end{proposition}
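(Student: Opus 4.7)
The plan is to show that the terminal value $\bv^{(R)}$ already upper-bounds the realized value $\bv^{\pi_{\min}^{(R)}}$ of the min-player's final strategy, at which point the hypothesis $\|\bv^{(R)}-\bv^*\|_\infty\le\epsilon$ closes the argument immediately. The key observation is that property 2 of the MDVSS definition states exactly $\cH_{\pi_{\min}^{(R)}}[\bv^{(R)}]\le\bv^{(R)}$, i.e.\ $\bv^{(R)}$ is a super-fixed-point of the half Bellman operator for $\pi_{\min}^{(R)}$.

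First, I would record the two structural facts about $\cH_{\pi_{\min}}$ that are needed: (i) \emph{monotonicity}, $\bv_1\le\bv_2\Rightarrow \cH_{\pi_{\min}}[\bv_1]\le\cH_{\pi_{\min}}[\bv_2]$; and (ii) \emph{$\gamma$-contraction in $\|\cdot\|_\infty$}, with unique fixed point $\bv^{\pi_{\min}}$. Both are noted in Section~\ref{sec:prelim} for $\cT$ and extend verbatim to $\cH_{\pi_{\min}}$, since fixing the min-player's action at each $s\in\statemin$ just turns the operator into the Bellman operator of the one-player (max) MDP obtained by restricting actions at min-states.

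Next, starting from the super-fixed-point inequality $\cH_{\pi_{\min}^{(R)}}[\bv^{(R)}]\le \bv^{(R)}$, apply $\cH_{\pi_{\min}^{(R)}}$ repeatedly: monotonicity gives, by a one-line induction, $\cH_{\pi_{\min}^{(R)}}^{t+1}[\bv^{(R)}]\le \cH_{\pi_{\min}^{(R)}}^{t}[\bv^{(R)}]\le \bv^{(R)}$ for every $t\ge 0$. Contraction then forces $\cH_{\pi_{\min}^{(R)}}^{t}[\bv^{(R)}]\to \bv^{\pi_{\min}^{(R)}}$ as $t\to\infty$. Taking the limit in the monotone bound yields
\[
\bv^{\pi_{\min}^{(R)}}\;\le\;\bv^{(R)}\;\le\;\bv^*+\epsilon\one,
\]
where the second inequality is just the hypothesis $\|\bv^{(R)}-\bv^*\|_\infty\le\epsilon$ together with property~1 of the MDVSS ($\bv^{(R)}\ge\bv^*$). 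This establishes the MDVSS part. The MIVSS part is fully symmetric: property~2 there gives $\cH_{\pi_{\max}^{(R)}}[\bv^{(R)}]\ge\bv^{(R)}$, so iterating yields an increasing sequence converging up to $\bv^{\pi_{\max}^{(R)}}$, and the analogous chain $\bv^{\pi_{\max}^{(R)}}\ge \bv^{(R)}\ge \bv^*-\epsilon\one$ finishes the argument.

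There is essentially no hard obstacle in this proof: the entire content is to convert the local super-/sub-fixed-point inequality built into the MDVSS/MIVSS definition into a global bound on the strategy's realized value by invoking monotonicity and contraction of the half Bellman operator. The only point to be a little careful about is to state explicitly that $\cH_{\pi_{\min}}$ (not merely $\cT$) enjoys the monotonicity and contraction properties and has $\bv^{\pi_{\min}}$ as its unique fixed point, which is standard but worth citing (e.g., \cite{hansen2013strategy,puterman2014markov}) for completeness.
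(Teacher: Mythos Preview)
Your proposal is correct and matches the paper's approach essentially verbatim: the paper's proof simply asserts $\bv^{\pi_{\min}^{(R)}}\le\bv^{(R)}$ ``by the property of an MDVSS'' and then invokes the hypothesis, with the iteration argument $\bv^{\pi_{\min}}=\lim_{t\to\infty}\cH_{\pi_{\min}}^{t}[\bv^{(i)}]\le\bv^{(i)}$ spelled out in the discussion immediately following Definition~\ref{def:mdvps}. Your write-up is a slightly more explicit version of the same idea; the appeal to property~1 for the upper bound $\bv^{(R)}\le\bv^*+\epsilon\one$ is redundant (the hypothesis alone gives it), but harmless.
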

\begin{proof}
By the property of an MDVSS, we have\[
\bv^{\pi_{\min}^{(R)}} \le \bv^{(R)}.
\]
Since $\bv^{(R)} \le \bv^* + \epsilon\one$, we prove the first inequality.
The second inequality follows similarly.
\end{proof}

Next we consider when it is the case we achieve a good terminal value. The following proposition shows that an MDVSS(MVISS) with an appropriate error parameters has a better terminal value than its initial value.
\begin{proposition}
	\label{mainprop0}
	Let $u\in(0,\beta), \beta = (1-\gamma)^{-1}$, $R=\Theta[\beta\log(\beta/u)]$.
	Suppose 
	an MDVSS (or MIVSS)  $\{\bv^{(i)},\bQ^{(i)}, \sigma^{(i)}, \bepsilon^{(i)}\}_{i=0}^R$
	satisfies
	\[
	 \|\bv^{(0)} - \bv^*\|_{\infty}\le u\quad\text{and}\quad
	\bepsilon^{(i)} = \sqrt{L \cdot\var({\bv^{(0)}}) /m} + \beta\cdot (L/m)^{3/4}+  u/ (CR),
	\]
	for some large constant $C>1$ and $m\ge 1$.
	Then we have
	\[
	\|\bv^{(R)}- \bv^*\|_{\infty} \le u /2
	\quad\text{for}\quad
	m = \wt{\Omega}\bigg(\frac{1}{\min(1, u^2)\cdot (1-\gamma)^3}\bigg)
	~.
	\]
\end{proposition}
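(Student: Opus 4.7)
The plan is to bound $\bv^{(R)}-\bv^*$ entrywise from above; the matching lower bound $\bv^{(R)}\ge \bv^*$ is built into MDVSS property~(1). I will introduce an auxiliary Markovian proxy strategy $\ms=(\sigma_1,\sigma_2,\ldots)$ built from the MDVSS data, derive a per-step recursion on $\bv^{(i)}-\bv[\ms_i]$, unroll it $R$ times, and then control the noise accumulation by a Markovian law of total variance together with Cauchy--Schwarz. The MIVSS case follows by the symmetric argument (reverse all inequalities, swap min/max roles).

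\emph{Constructing the proxy.}  For $s\in\cS_{\max}$ set $\sigma_i(s):=\arg\max_a \bQ^{(i)}(s,a)$, so that $V[\bQ^{(i)}](s)=\bQ^{(i)}(s,\sigma_i(s))$; for $s\in\cS_{\min}$ set $\sigma_i(s):=\pi_{\min}^*(s)$ for some fixed optimal min-player strategy. I index the Markovian strategy so that $\bv[\ms_i]$ plays $\sigma_i$ first and then continues with $\ms_{i-1}$, giving $\bv[\ms_i](s)=\br(s,\sigma_i(s))+\gamma\bP(\cdot|s,\sigma_i(s))^\top \bv[\ms_{i-1}]$. Because the min-player plays optimally throughout, underestimation $\bv[\ms_i]\le \bv^*$ holds entrywise. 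Combining MDVSS property~(4), $\bv^{(i)}\le V[\bQ^{(i)}]$, with $V[\bQ^{(i)}](s)=\bQ^{(i)}(s,\sigma_i(s))$ on max-states and the trivial bound $V[\bQ^{(i)}](s)=\min_a \bQ^{(i)}(s,a)\le \bQ^{(i)}(s,\sigma_i(s))$ on min-states, and then invoking property~(3), yields the per-step recursion
\[
\bv^{(i)}-\bv[\ms_i]\;\le\;\gamma\,\bP^{\sigma_i}\bigl(\bv^{(i-1)}-\bv[\ms_{i-1}]\bigr)+\bepsilon^{(i)}_{\sigma_i}.
\]

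\emph{Unrolling and noise analysis.}  Writing $M_j:=\bP^{\sigma_j}$ and iterating the recursion gives
\[
\bv^{(R)}-\bv^*\;\le\;\gamma^R M_R\cdots M_1(\bv^{(0)}-\bv[\ms_0])\;+\;\sum_{i=1}^R \gamma^{R-i} M_R\cdots M_{i+1}\,\bepsilon^{(i)}_{\sigma_i}.
\]
The first term is at most $\gamma^R\beta\,\one\le u/(4C)$ after $R=\Theta(\beta\log(\beta/u))$. The two deterministic pieces $\beta(L/m)^{3/4}$ and $u/(CR)$ of $\bepsilon^{(i)}$ sum to $O(\beta^2(L/m)^{3/4})$ and $O(u/C)$ respectively, both dominated by $u/(4C)$ for $m$ large enough and $C$ chosen appropriately. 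The principal contribution is
\[
S\;:=\;\sum_{i=1}^R \gamma^{R-i} M_R\cdots M_{i+1}\sqrt{L\,\var(\bv^{(0)})_{\sigma_i}/m}.
\]
Jensen's inequality (applied to each positive linear operator $M_R\cdots M_{i+1}$) gives $M_R\cdots M_{i+1}\sqrt{x}\le \sqrt{M_R\cdots M_{i+1}\,x}$, after which Cauchy--Schwarz in $i$ with weights $(1,\gamma^{2(R-i)})$ yields
\[
S\;\le\;\sqrt{LR/m}\cdot\sqrt{\,\sum_{i=1}^R \gamma^{2(R-i)} M_R\cdots M_{i+1}\,\var(\bv^{(0)})_{\sigma_i}\,}\,.
\]
The Markovian law of total variance, applied to $\ms_R$ whose kernels at times $0,1,\ldots$ are $M_R,M_{R-1},\ldots$, bounds $\sum_i \gamma^{2(R-i)} M_R\cdots M_{i+1}\,\var(\bv[\ms_{i-1}])_{\sigma_i}\le \|\bv[\ms_R]\|_\infty^2\,\one\le \beta^2\one$. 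The swap from $\var(\bv[\ms_{i-1}])$ to $\var(\bv^{(0)})$ is handled by the elementary bound $\var(\bv^{(0)})\le 2\var(\bv[\ms_{i-1}])+O(\beta u)\,\one$, which uses $\|\bv^{(0)}-\bv^*\|_\infty\le u$, underestimation, and $\|\bv^{(0)}\|_\infty,\|\bv[\ms_{i-1}]\|_\infty\le \beta$; this introduces only a lower-order additive term in $S$. Hence $S\le \tilde O(\sqrt{\beta^3/m})$, which is at most $u/4$ once $m=\tilde\Omega(\beta^3/\min(1,u^2))$. Summing all contributions gives $\|\bv^{(R)}-\bv^*\|_\infty\le u/2$, as desired.

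\emph{Main obstacle.}  The hardest step will be the Markovian LTV in the bound on $\sum_i \gamma^{2(R-i)} M_R\cdots M_{i+1}\var(\bv[\ms_{i-1}])_{\sigma_i}$: the transition operators $M_R,M_{R-1},\ldots$ are all different, so the usual stationary-policy telescoping must be replaced by a conditional-variance decomposition of the discounted return under the time-inhomogeneous Markov chain induced by $\ms_R$. A secondary delicacy is that $\bepsilon^{(i)}$ is given in terms of $\var(\bv^{(0)})$ rather than $\var(\bv[\ms_{i-1}])$, forcing the $O(\beta u)$ perturbation step above; the hypothesis $\|\bv^{(0)}-\bv^*\|_\infty\le u$ is exactly what makes this correction lower-order.
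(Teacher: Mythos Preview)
Your overall architecture mirrors the paper's: auxiliary Markovian proxy with the max-player taking $\arg\max_a\bQ^{(i)}(s,a)$, the contraction recursion on $\bv^{(i)}-\bv[\ms_i]$, unrolling, Cauchy--Schwarz plus Jensen, and the time-inhomogeneous law of total variance. Your choice of $\sigma_i(s)=\pi_{\min}^*(s)$ on min-states is a harmless simplification of the paper's choice (they take the optimal counter to the whole Markovian max strategy), and both underestimation and the per-step recursion go through with your choice.

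The gap is in your variance-swap step. You assert $\var(\bv^{(0)})\le 2\var(\bv[\ms_{i-1}])+O(\beta u)\,\one$ ``using $\|\bv^{(0)}-\bv^*\|_\infty\le u$, underestimation, and $\|\cdot\|_\infty\le\beta$''. But underestimation only gives $\bv[\ms_{i-1}]\le\bv^*$; it gives no \emph{lower} bound on $\bv[\ms_{i-1}]$. Hence $\|\bv^{(0)}-\bv[\ms_{i-1}]\|_\infty$ is not controlled by $u$ (or by $\sqrt{\beta u}$) from those facts alone---a priori the max-player's greedy action from an early $\bQ^{(i)}$ can be terrible, pushing $\bv[\ms_{i-1}]$ far below $\bv^*$. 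Any bound of the form $\var(x)\le 2\var(y)+2\|x-y\|_\infty^2$ then fails to give $O(\beta u)$.

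The paper closes this with a \emph{bootstrap}: it first bounds $\|\bv^{(i-1)}-\bv[\ms_{i-1}]\|_\infty$ by running the same unrolled recursion with the crude estimate $\var(\bv^{(0)})\le\beta^2\one$, obtaining $\|\bv^{(0)}-\bv[\ms_{i-1}]\|_\infty\le u + O\bigl(R\beta\sqrt{L/m}+R\beta(L/m)^{3/4}\bigr)$. That quantity is then fed back as the perturbation when passing from $\sqrt{\var(\bv^{(0)})}$ to $\sqrt{\var(\bv[\ms_{i-1}])}$; since it still carries a factor $\sqrt{L/m}$ after being reinserted into $S$, the resulting cross term is $O(R^2\beta L/m)$, which is lower order for $m=\wt\Omega(\beta^3/\min(1,u^2))$. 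You will need this two-pass argument (or an equivalent) in place of the one-line swap.
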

Note that Proposition~\ref{mainprop0} shows that in an MDVSS/MIVSS, the distance to the optimal value of the terminal value reduces by at least half of its initial value. 
Starting from some $\bv^{(0)}$ with distance at most $\beta$ to $\bv^*$, by concatenating $O(\log(\beta/\epsilon))$ many MDVSS/MIVSS's, with the initial value of one sequence set as the terminal value of the last sequence,  an $\epsilon$-optimal value can be obtained.
The remainder of this subsection devotes to proving the above proposition.
Since MIVSS and MDVSS are symmetric, in the following analysis, we focus on MDVSS and the analysis follows similarly for MIVSS.
\subsubsection{Auxiliary Markovian Strategy}
Due to the lack of monotonicity  we do not know how to use the optimal strategy $\sigma^*$ to carefully account for the error accumulation of the MDVSS.
To resolve this issue, we instead use the following auxiliary Markovian strategy as such a proxy.
\begin{definition}[Auxiliary Strategy]
	Given a MDVSS, $\{\bv^{(i)},\bQ^{(i)}, \sigma^{(i)}, \bepsilon^{(i)}\}_{i=0}^R$, we
	denote the Markovian auxiliary strategy for the max-player as \[
	\mpi{(i)}_{\aux\max}
	 = (\pi^{(i)}_{\aux\max}, \pi^{(i-1)}_{\aux\max}, \ldots, \pi^{(1)}_{\aux\max}, \pi^{*}_{\max},  \pi^{*}_{\max}, \pi^{*}_{\max} \ldots ),
	\]
	where $\pi^{(i)}_{\aux\max}(s) = \arg\max_{a}\bQ^{(i)}(s,a)$ for $s\in \cS_{\max}$.
	We denote the auxiliary strategy for the min-player as 
	\[
	\mpi{(i)}_{\aux\min} = \sigma_{\min}[\pi^{\infty(i)}_{\aux\max}]
	= (\pi^{(i)}_{\aux\min}, \pi^{(i-1)}_{\aux\min}, \ldots, \pi^{(1)}_{\aux\min}, \pi^{*}_{\min},  \pi^{*}_{\min}, \pi^{*}_{\min} \ldots ),
	\]
	which is the optimal counter Markovian policy of $\pi^{\infty(i)}_{\aux\max}$, i.e.,
	\[
	\forall s\in \cS_{\min}:\quad \pi^{\infty(i)}_{\aux\min}(s)
	= \arg \min_{a}\big[\br(s,a) + \gamma \bP(\cdot | s, a)^\top \bv[\pi^{\infty(i-1)}_{\aux}]\big].
	\]
	We also denote \[\sigma^{\infty(i)}_{\aux} 
	=\bigg[( \pi^{(i)}_{\aux\min}, \pi^{(i)}_{\aux\max}), ( \pi^{(i-1)}_{\aux\min},\pi^{(i-1)}_{\aux\max}), \ldots, ( \pi^{(1)}_{\aux\min},\pi^{(1)}_{\aux\max}), \sigma^{*},  \sigma^{*}, \sigma^{*} \ldots \bigg]
	.
	\]
	Furthermore, we denote 
	$
	\sigma^{(i)}_{\aux} = (\pi^{(i)}_{\aux\min}, \pi^{(i)}_{\aux\max})
	$ for $i\ge 1$ and $\sigma^{(i)}_{\aux} = \sigma^*$ for $i\le 0$.
\end{definition}

For a Markovian strategy, we first show that the strategy has a value always smaller than the optimal value.
\begin{lemma}
	For all $i\in [R]$, we have
	\[
	\bv\big[\sigma^{\infty(i)}_{\aux}\big]\le \bv^*.
	\]
\end{lemma}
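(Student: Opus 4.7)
The plan is to prove the inequality by induction on $i \ge 0$. For the base case $i=0$, by the stated convention $\sigma^{(j)}_{\aux} = \sigma^*$ for $j \le 0$, the strategy $\sigma^{\infty(0)}_{\aux}$ is simply $\sigma^*$ played at every time step, so $\bv[\sigma^{\infty(0)}_{\aux}] = \bv^{\sigma^*} = \bv^*$, which is even an equality.

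For the inductive step, assume $\bv[\sigma^{\infty(i-1)}_{\aux}] \le \bv^*$ and consider $\sigma^{\infty(i)}_{\aux}$. The key structural observation is that deleting the first strategy in $\sigma^{\infty(i)}_{\aux}$ produces exactly $\sigma^{\infty(i-1)}_{\aux}$; this is immediate from the concatenated definition
\begin{equation*}
\sigma^{\infty(i)}_{\aux} = \bigl(\sigma^{(i)}_{\aux}, \sigma^{(i-1)}_{\aux}, \ldots, \sigma^{(1)}_{\aux}, \sigma^*, \sigma^*, \ldots\bigr).
\end{equation*}
Combining this with the one-step value expansion for Markovian strategies yields
\begin{equation*}
\bv[\sigma^{\infty(i)}_{\aux}](s) \;=\; \br(s, \sigma^{(i)}_{\aux}(s)) \;+\; \gamma\, \bP(\cdot\,|\,s, \sigma^{(i)}_{\aux}(s))^\top \bv[\sigma^{\infty(i-1)}_{\aux}],
\end{equation*}
and then by the inductive hypothesis together with nonnegativity of transition probabilities,
\begin{equation*}
\bv[\sigma^{\infty(i)}_{\aux}](s) \;\le\; \br(s, \sigma^{(i)}_{\aux}(s)) \;+\; \gamma\, \bP(\cdot\,|\,s, \sigma^{(i)}_{\aux}(s))^\top \bv^*.
\end{equation*}

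To finish, I would split by the owner of the state. For $s \in \cS_{\min}$, the action $\pi^{(i)}_{\aux\min}(s)$ is by definition the $\arg\min$ over $a$ of $\br(s,a) + \gamma \bP(\cdot\,|\,s,a)^\top \bv[\sigma^{\infty(i-1)}_{\aux}]$; enlarging the trailing value vector to $\bv^*$ only increases each such term and hence the minimum, so the displayed bound is at most $\min_a [\br(s,a) + \gamma \bP(\cdot\,|\,s,a)^\top \bv^*] = \cT[\bv^*](s) = \bv^*(s)$, where the last equality uses the Bellman fixed-point identity. For $s \in \cS_{\max}$, the single chosen action is bounded above by the maximum over actions, giving $\max_a [\br(s,a) + \gamma \bP(\cdot\,|\,s,a)^\top \bv^*] = \cT[\bv^*](s) = \bv^*(s)$ as well. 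In both cases $\bv[\sigma^{\infty(i)}_{\aux}](s) \le \bv^*(s)$, which closes the induction.

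The only real subtlety is careful bookkeeping of the time indexing of the Markovian strategies --- verifying that the one-step shift of $\sigma^{\infty(i)}_{\aux}$ equals $\sigma^{\infty(i-1)}_{\aux}$, so that the one-step value recursion and the inductive hypothesis chain together correctly. Beyond this, the proof is just two applications of entrywise monotonicity (in the trailing value vector, and in bounding a specific action by $\min_a$ or $\max_a$) together with $\cT[\bv^*] = \bv^*$.
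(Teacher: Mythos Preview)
Your argument is correct and is in fact a bit more direct than the paper's. The paper proves the lemma in two stages: it first introduces an intermediate Markovian strategy $\wt{\sigma}^{\infty(i)}_{\aux}$ in which the min-player is replaced by $\pi^*_{\min}$ at every time step, shows by induction that $\bv[\sigma^{\infty(i)}_{\aux}] \le \bv[\wt{\sigma}^{\infty(i)}_{\aux}]$, and then argues separately that $\bv[\wt{\sigma}^{\infty(i)}_{\aux}] \le \bv^*$ because any max-player strategy against the fixed $\pi^*_{\min}$ is dominated by the optimal counter $\pi^*_{\max}$. You bypass this intermediate comparison entirely and induct directly on $\bv[\sigma^{\infty(i)}_{\aux}] \le \bv^*$, exploiting the definition of $\pi^{(i)}_{\aux\min}$ as the one-step best response against $\bv[\sigma^{\infty(i-1)}_{\aux}]$. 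Both routes use the same ingredients (one-step value expansion, induction, case split on state owner, $\cT[\bv^*]=\bv^*$); yours just packages them into a single induction.

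One small presentation glitch: in the $\cS_{\min}$ case you write ``so the displayed bound is at most $\min_a[\ldots]$,'' but the displayed bound is $\br(s,\sigma^{(i)}_{\aux}(s)) + \gamma \bP(\cdot|s,\sigma^{(i)}_{\aux}(s))^\top \bv^*$, which as a specific action is \emph{at least} the minimum, not at most. What you actually want (and what your surrounding text makes clear you understand) is to return to the equality $\bv[\sigma^{\infty(i)}_{\aux}](s) = \min_a[\br(s,a)+\gamma\bP(\cdot|s,a)^\top \bv[\sigma^{\infty(i-1)}_{\aux}]]$ and apply monotonicity of the minimum directly, without routing through the displayed bound. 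The $\cS_{\max}$ case does use the displayed bound correctly. Tightening that sentence would make the write-up clean.
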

\begin{proof}
	Denote
	\[
	\wt{\sigma}^{\infty(i)}_{\aux} =  \bigg[( \pi^*_{\min}, \pi^{(i)}_{\aux\max}), ( \pi^*_{\min},\pi^{(i-1)}_{\aux\max}), \ldots, ( \pi^*_{\min},\pi^{(1)}_{\aux\max}), \sigma^{*},  \sigma^{*}, \sigma^{*} \ldots \bigg].
	\]
	Denote $\sigma^{\infty(0)}_{\aux}= \wt{\sigma}^{\infty(0)}_{\aux} = (\sigma^*, \sigma^*, \ldots, )$.
	We first show that for all $i\in [R]$,
	$\bv\big[\sigma^{\infty(i)}_{\aux}\big]\le \bv\big[\wt{\sigma}^{\infty(i)}_{\aux}\big]$.
	Indeed it holds trivially for $i=0$. 
	Suppose it holds for some $i\ge 0$.
	Then, for each $s\in \cS_{\min}$, we have,
	\begin{align*}
	\bv[\sigma^{\infty(i)}_{\aux}](s)
	&= \min_{a}\big[\br(s,a) + \gamma \bP(\cdot ~|~ s, a)^\top \bv(\sigma^{\infty(i-1)}_{\aux})\big] \\
	&\le \big[\br(s,\sigma^*(s)) + \gamma \bP(\cdot ~|~ s, \sigma^*(s))^\top \bv(\sigma^{\infty(i-1)}_{\aux})\big]  \\
	&\le \big[\br(s,\sigma^*(s)) + \gamma \bP(\cdot ~|~ s, \sigma^*(s))^\top \bv(\wt{\sigma}^{\infty(i-1)}_{\aux})\big] \qquad(\text{due to } \bv\big[\sigma^{\infty(i-1)}_{\aux}\big]\le \bv\big[\wt{\sigma}^{\infty(i-1)}_{\aux}\big])\\
	& = \bv[\wt{\sigma}^{\infty(i)}_{\aux}](s).
	\end{align*}
	For each $s\in \cS_{\max}$, we have,
	\begin{align*}
	\bv[\sigma^{\infty(i)}_{\aux}](s)
	&= \big[\br(s,\sigma^{(i)}_{\aux}(s)) + \gamma \bP(\cdot | s, \sigma^{(i)}_{\aux}(s))^\top \bv(\sigma^{\infty(i-1)}_{\aux})\big] \\
	&\le\big[\br(s,\sigma^{(i)}_{\aux}(s)) + \gamma \bP(\cdot | s, \sigma^{(i)}_{\aux}(s))^\top \bv(
	\wt{\sigma}^{\infty(i-1)}_{\aux})\big] \qquad(\text{due to } \bv\big[\sigma^{\infty(i-1)}_{\aux}\big]\le \bv\big[\wt{\sigma}^{\infty(i-1)}_{\aux}\big])\\
	& = \bv[\wt{\sigma}^{\infty(i)}_{\aux}](s).
	\end{align*}
	Now, since $(\pi^*_{\max}, \pi^*_{\max}, \ldots)$ is the optimal counter strategy of $(\pi^*_{\min}, \pi^*_{\min}, \ldots)$, we have
	\[
	\bv[\wt{\sigma}^{\infty(i)}_{\aux}] \le \bv^* 
	\]
	holds similarly. This concludes the proof.
\end{proof}

Consider the error vector $\bepsilon^{(i)}$.
Recall that $ \bepsilon^{(i)}_{\sigma_{\aux}^{(i)}}$ denotes a vector in $\RR^{\cS}$ whose $s$-th entry is given by $\bepsilon^{(i)}(s, \sigma_{\aux}^{(i)}(s))$.
The next lemma shows a recursive relation between a Markovian strategy and the corresponding MDVSS values.
\begin{lemma}
	For all $i\in [R]$, we have 
	\[
	\bv^{(i)} - \bv[{\sigma^{\infty(i)}_{\aux}}] \le \gamma \bP^{\sigma_{\aux}^{(i)}}\big(\bv^{(i-1)} - \bv[{\sigma_{\aux}^{\infty(i-1)}}]\big) + \bepsilon^{(i)}_{\sigma_{\aux}^{(i)}}
	\]
\end{lemma}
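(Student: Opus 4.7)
The plan is to prove the pointwise inequality by splitting into the two cases $s \in \cS_{\max}$ and $s \in \cS_{\min}$, and in each case peeling one step of the Bellman-type recursion using the two definitional properties of the MDVSS that connect $\bv^{(i)}$, $\bQ^{(i)}$, and $\bv^{(i-1)}$ (properties 3 and 4 of Definition~\ref{def:mdvps}) together with the way $\sigma^{(i)}_{\aux}$ is defined on each half of the state space.

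\textbf{Case $s \in \cS_{\max}$.} By definition $\sigma^{(i)}_{\aux}(s) = \pi^{(i)}_{\aux\max}(s) = \arg\max_{a} \bQ^{(i)}(s,a)$, so $V[\bQ^{(i)}](s) = \bQ^{(i)}(s, \sigma^{(i)}_{\aux}(s))$. Property 4 of the MDVSS gives $\bv^{(i)}(s) \le V[\bQ^{(i)}](s)$, and property 3 gives $\bQ^{(i)}(s,a) \le \br(s,a) + \gamma \bP(\cdot\,|\,s,a)^\top \bv^{(i-1)} + \bepsilon^{(i)}(s,a)$. Chaining these at $a = \sigma^{(i)}_{\aux}(s)$ and subtracting the Bellman identity $\bv[\sigma^{\infty(i)}_{\aux}](s) = \br(s, \sigma^{(i)}_{\aux}(s)) + \gamma \bP(\cdot\,|\,s, \sigma^{(i)}_{\aux}(s))^\top \bv[\sigma^{\infty(i-1)}_{\aux}]$ (which holds because the Markovian strategy plays $\sigma^{(i)}_{\aux}$ at the first step and then $\sigma^{\infty(i-1)}_{\aux}$) yields the claimed inequality at $s$.

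\textbf{Case $s \in \cS_{\min}$.} For the Markovian strategy, $\pi^{(i)}_{\aux\min}(s) = \arg\min_{a}\bigl[\br(s,a) + \gamma \bP(\cdot\,|\,s,a)^\top \bv[\sigma^{\infty(i-1)}_{\aux}]\bigr]$, so by construction $\bv[\sigma^{\infty(i)}_{\aux}](s) = \br(s, \sigma^{(i)}_{\aux}(s)) + \gamma \bP(\cdot\,|\,s, \sigma^{(i)}_{\aux}(s))^\top \bv[\sigma^{\infty(i-1)}_{\aux}]$. Meanwhile, on $\cS_{\min}$ property 4 gives $\bv^{(i)}(s) \le V[\bQ^{(i)}](s) = \min_a \bQ^{(i)}(s,a) \le \bQ^{(i)}(s, \sigma^{(i)}_{\aux}(s))$; crucially the upper bound holds for \emph{any} action, so we can evaluate at $a = \sigma^{(i)}_{\aux}(s)$ even though that action is not the greedy action of $\bQ^{(i)}$ on $\cS_{\min}$. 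Applying property 3 to this $\bQ^{(i)}$ entry and subtracting the expression for $\bv[\sigma^{\infty(i)}_{\aux}](s)$ gives the same inequality.

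\textbf{Obstacles and remarks.} There is essentially no obstacle beyond being careful about which action is used in each case: on $\cS_{\max}$ the auxiliary action is the greedy action of $\bQ^{(i)}$, which is exactly what property 4 supplies; on $\cS_{\min}$ the auxiliary action is defined against $\bv[\sigma^{\infty(i-1)}_{\aux}]$ and is not related to $\bQ^{(i)}$, but this is harmless because property 4 on $\cS_{\min}$ upper-bounds $\bv^{(i)}(s)$ by the minimum of $\bQ^{(i)}(s,\cdot)$, hence by any particular entry. Combining the two cases gives the vector inequality, since $\bepsilon^{(i)}_{\sigma_{\aux}^{(i)}}(s) = \bepsilon^{(i)}(s, \sigma_{\aux}^{(i)}(s))$ and $[\bP^{\sigma^{(i)}_{\aux}} \bw](s) = \bP(\cdot\,|\,s, \sigma^{(i)}_{\aux}(s))^\top \bw$ by notation.
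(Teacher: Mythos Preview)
Your proposal is correct and follows essentially the same approach as the paper: split into $s\in\cS_{\max}$ and $s\in\cS_{\min}$, use properties 3 and 4 of the MDVSS to bound $\bv^{(i)}(s)$ by $\bQ^{(i)}(s,\sigma^{(i)}_{\aux}(s))$ and then by $\br+\gamma\bP\bv^{(i-1)}+\bepsilon^{(i)}$ at that action, and subtract the one-step Bellman identity for the Markovian auxiliary strategy. The only cosmetic difference is the order in which you treat the two cases.
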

\begin{proof}
	Note that $\bv^{(i)}\ge \bv^*\ge \bv[{\sigma^{\infty(i)}_{\aux}}]$.
	For each $s\in \cS_{\min}$, we have
	\begin{align*}
	\bv^{(i)}(s)
	&\le \min_{a} \bQ^{(i)}(s,a)
	\le \bQ^{(i)}(s,{\sigma^{(i)}_{\aux}}(s))\\
	&\le  \br(s,{\sigma^{(i)}_{\aux}}(s)) + \gamma \bP(\cdot|s,{\sigma^{(i)}_{\aux}}(s))^{\top} \bv^{(i-1)} + \bepsilon^{(i)}(s,{\sigma^{(i)}_{\aux}}(s)),
	\end{align*}
	and 
	\begin{align*} 
	\bv[{\sigma^{\infty(i)}_{\aux}}](s)  = \br(s,{\sigma^{(i)}_{\aux}}(s)) + \gamma \bP(\cdot|s,{\sigma^{(i)}_{\aux}}(s))^{\top} \bv[{\sigma^{\infty(i-1)}_{\aux}}].
	\end{align*}
	Thus
	\[
	\bv^{(i)}(s) - \bv[{\sigma^{\infty(i)}_{\aux}}](s)
	\le \gamma \bP(\cdot|s,{\sigma^{(i)}_{\aux}}(s))^{\top}\big(\bv^{(i-1)}- \bv[{\sigma^{\infty(i-1)}_{\aux}}]\big)  + \bepsilon^{(i)}(s,{\sigma^{(i)}_{\aux}}(s)).
	\]
	Similarly, for each $s\in \cS_{\max}$, we have,
	$\bv^{(i)}(s)\le \max_{a}\bQ^{(i)}(s,a) := \bQ^{(i)}(s,{\sigma^{(i)}_{\aux}}(s))$, thus
	\begin{align*}
	\bv^{(i)}(s) - \bv[{\sigma^{\infty(i)}_{\aux}}](s)
	&\le 
	\bQ^{(i)}(s,{\sigma^{(i)}_{\aux}}(s)) - \bv[{\sigma^{\infty(i)}_{\aux}}](s)\\
	&\le \gamma \bP(\cdot|s,{\sigma^{(i)}_{\aux}}(s))^{\top}\big(\bv^{(i-1)}- \bv[{\sigma^{\infty(i-1)}_{\aux}}]\big)  + \bepsilon^{(i)}(s,{\sigma^{(i)}_{\aux}}(s))
	\end{align*}
	as desired.
\end{proof}
With an inductive application of the above lemma, we obtain
the following corollary, which states an upper bound between the difference of $\bv^{(R)}$ and $\bv[{\sigma^{\infty(R)}_{\aux}}]$.
It connects the upper bound with a recursive propagation of the error.
\begin{corollary} 
	\label{corr:expansion}
	\begin{align*}
	\bv^{(R)} - \bv[{\sigma^{\infty(R)}_{\aux}}] 
	\le \gamma^{R}& \bP^{\sigma_{\aux}^{(R)}}\cdot \bP^{\sigma_{\aux}^{(R-1)}} \cdot \ldots \bP^{\sigma_{\aux}^{(1)}}\big(\bv^{(0)} - \bv^*\big)\\
	& + \sum_{i=1}^{R}\gamma^{R-i}  
	\bP^{\sigma_{\aux}^{(R)}} \cdot \bP^{\sigma_{\aux}^{(R-1)}} \cdot \ldots \bP^{\sigma_{\aux}^{(i+1)}}  \bepsilon^{(i)}_{\sigma_{\aux}^{(i)}}.
	\end{align*}
\end{corollary}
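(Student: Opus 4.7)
}
The plan is a straightforward induction on the iteration index, unrolling the one-step recursion established in the preceding lemma. First I would fix the base case: by definition $\sigma_{\aux}^{\infty(0)} = (\sigma^*, \sigma^*, \ldots)$, hence $\bv[\sigma_{\aux}^{\infty(0)}] = \bv^*$, so the quantity $\bv^{(0)} - \bv[\sigma_{\aux}^{\infty(0)}]$ coincides with $\bv^{(0)} - \bv^*$, which is exactly the term that appears in the first summand of the target bound.

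Next, I would apply the preceding lemma $R$ times. Write the one-step inequality as
\[
\bv^{(i)} - \bv[\sigma_{\aux}^{\infty(i)}] \;\le\; \gamma\, \bP^{\sigma_{\aux}^{(i)}}\!\big(\bv^{(i-1)} - \bv[\sigma_{\aux}^{\infty(i-1)}]\big) + \bepsilon^{(i)}_{\sigma_{\aux}^{(i)}}.
\]
The crucial enabling fact is that for any strategy $\sigma$ the operator $\bP^{\sigma}$ has nonnegative entries (it is a row-stochastic matrix), so left multiplication preserves entrywise inequalities. Hence I can substitute the one-step bound for $\bv^{(i-1)} - \bv[\sigma_{\aux}^{\infty(i-1)}]$ into the right-hand side after applying the operator $\gamma\, \bP^{\sigma_{\aux}^{(i)}}$, without reversing any inequality. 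Iterating this substitution from $i = R$ down to $i = 1$ produces the telescoping composition $\bP^{\sigma_{\aux}^{(R)}}\cdot \bP^{\sigma_{\aux}^{(R-1)}} \cdots \bP^{\sigma_{\aux}^{(i+1)}}$ in front of each error term $\bepsilon^{(i)}_{\sigma_{\aux}^{(i)}}$ with the correct power $\gamma^{R-i}$, and the full product $\bP^{\sigma_{\aux}^{(R)}} \cdots \bP^{\sigma_{\aux}^{(1)}}$ with the factor $\gamma^{R}$ in front of the initial residual $\bv^{(0)} - \bv^*$. Combined with the base case above, this yields the stated bound.

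Formally I would phrase the induction hypothesis as: for every $1 \le k \le R$,
\[
\bv^{(k)} - \bv[\sigma_{\aux}^{\infty(k)}] \;\le\; \gamma^{k} \bP^{\sigma_{\aux}^{(k)}}\cdots \bP^{\sigma_{\aux}^{(1)}}\big(\bv^{(0)}-\bv^*\big) + \sum_{i=1}^{k}\gamma^{k-i} \bP^{\sigma_{\aux}^{(k)}}\cdots \bP^{\sigma_{\aux}^{(i+1)}} \bepsilon^{(i)}_{\sigma_{\aux}^{(i)}},
\]
with the convention that the empty operator product at $i = k$ is the identity. The case $k=1$ is the one-step lemma itself (using $\bv[\sigma_{\aux}^{\infty(0)}] = \bv^*$), and the inductive step is immediate by applying $\gamma\,\bP^{\sigma_{\aux}^{(k+1)}}$ to the hypothesis and adding the error term $\bepsilon^{(k+1)}_{\sigma_{\aux}^{(k+1)}}$. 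Setting $k = R$ recovers the claim.

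There is no real obstacle here; the only thing to be careful about is that the one-step lemma gives an entrywise inequality, so the nonnegativity of $\bP^{\sigma}$ must be explicitly used to justify preserving the direction of the inequality under composition. Everything else is bookkeeping of indices and powers of $\gamma$.
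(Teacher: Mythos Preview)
Your proposal is correct and matches the paper's approach exactly: the paper states the corollary follows ``with an inductive application of the above lemma,'' and your argument spells out precisely that induction, including the base case $\bv[\sigma_{\aux}^{\infty(0)}] = \bv^*$ and the use of nonnegativity of $\bP^{\sigma}$ to propagate the entrywise inequality.
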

By this corollary, we know that the major error accumulation term is the second term.

\subsubsection{Error Accumulation}
We now consider the error accumulation in the sequence. 
As will show shortly, we relate $\bepsilon^{(i)}$ to the variance vector
$\sqrt{\var(\bv[\sigma_{\aux}^{\infty(i-1)}])}\in \RR^{\cS\times\cA}$, where
$\var(\bv)[s,a]
\defeq \var_{s'\sim P(\cdot|s,a)}[\bv(s')]$, $\forall (s,a), \bv$.
Therefore, it suffices to consider the following bound.

\begin{lemma}
	\begin{align*}
	\sum_{i=1}^{R}&\gamma^{R-i}  
	\bP^{\sigma_{\aux}^{(R)}} \cdot \bP^{\sigma_{\aux}^{(R-1)}} \cdot \ldots \bP^{\sigma_{\aux}^{(i+1)}}\sqrt{\var(\bv[\sigma_{\aux}^{\infty(i-1)}])_{\sigma_{\aux}^{(i)}}}\\
	&\le \sqrt{R\sum_{i=1}^{R}\gamma^{2(R-i)}  
		\bP^{\sigma_{\aux}^{(R)}} \cdot \bP^{\sigma_{\aux}^{(R-1)}} \cdot \ldots \bP^{\sigma_{\aux}^{(i+1)}}\var(\bv[\sigma_{\aux}^{\infty(i-1)}])_{\sigma_{\aux}^{(i)}}}
	\end{align*}
\end{lemma}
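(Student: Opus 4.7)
\medskip
\noindent\textbf{Proof proposal.} The plan is to reduce the claim to a combination of Jensen's inequality (applied rowwise to the stochastic operators $\bP^{\strat}$) and Cauchy--Schwarz applied to the outer sum over $i$. To streamline notation, for $i \in [R]$ let
\[
M_i \;\defeq\; \bP^{\sigma_{\aux}^{(R)}} \cdot \bP^{\sigma_{\aux}^{(R-1)}} \cdots \bP^{\sigma_{\aux}^{(i+1)}}
\qquad \text{and} \qquad
X_i \;\defeq\; \var(\bv[\sigma_{\aux}^{\infty(i-1)}])_{\sigma_{\aux}^{(i)}} \in \RR_{\ge 0}^{\cS},
\]
so the stated inequality becomes
$\sum_{i=1}^R \gamma^{R-i} M_i \sqrt{X_i} \le \sqrt{R \sum_{i=1}^R \gamma^{2(R-i)} M_i X_i}$ entrywise.

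First I would observe that each $\bP^{\strat}$ is row-stochastic (its rows are the probability vectors $\bP(\cdot \mid s, \strat(s))$), and since the product of row-stochastic matrices is row-stochastic, every $M_i$ is row-stochastic. Hence, for each fixed $i$, each component $(M_i \sqrt{X_i})(s)$ is the expectation of $\sqrt{X_i(\cdot)}$ under the probability distribution given by the $s$-th row of $M_i$. Concavity of $\sqrt{\cdot}$ and Jensen's inequality therefore give the pointwise bound
\[
M_i \sqrt{X_i} \;\le\; \sqrt{M_i X_i},
\]
with the square root taken entrywise. Plugging this into the LHS yields
\[
\sum_{i=1}^R \gamma^{R-i} M_i \sqrt{X_i} \;\le\; \sum_{i=1}^R \gamma^{R-i} \sqrt{M_i X_i}.
\]

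Next I would apply Cauchy--Schwarz over the index $i$ entrywise: for each state $s$,
\[
\sum_{i=1}^R \gamma^{R-i} \sqrt{(M_i X_i)(s)}
\;\le\; \sqrt{\sum_{i=1}^R 1^2} \cdot \sqrt{\sum_{i=1}^R \gamma^{2(R-i)} (M_i X_i)(s)}
\;=\; \sqrt{R \sum_{i=1}^R \gamma^{2(R-i)} (M_i X_i)(s)}.
\]
Combining the two displays gives the target inequality entrywise, which is precisely the claim.

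The only potential subtlety is making sure both manipulations are valid entrywise on non-negative vectors: $X_i \ge 0$ since variances are non-negative, so $\sqrt{X_i}$ is well-defined, and $M_i X_i \ge 0$ since $M_i$ has non-negative entries, so the outer square root is also well-defined. Both Jensen's step and Cauchy--Schwarz are standard; the main (very mild) obstacle is only the bookkeeping of the stochastic product structure, and no property beyond row-stochasticity of the $\bP^{\strat}$ operators is needed.
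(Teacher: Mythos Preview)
Your proof is correct and matches the paper's approach exactly: the paper's one-line proof reads ``Follows from Cauchy--Schwarz and that the $\bP$ matrices are non-negative with each row summing to $1$,'' which is precisely your Jensen step (row-stochasticity gives $M_i\sqrt{X_i}\le\sqrt{M_iX_i}$) followed by Cauchy--Schwarz over $i$. Nothing is missing.
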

\begin{proof}
	Follows from Cauchy-Schwarz and that the $\bP$ matrices are non-negative with each row summing to $1$. 
\end{proof}

The following lemma establishes a Bellman-like equation for the variance vector of a Markovian strategy.
\begin{lemma}\label{lem:variance}
	For any Markovian strategy $\pi^{\infty} = (\pi^{(0)}, \pi^{(1)}, \ldots, )$, 
	we have, for all $s\in \cS$
	\begin{align}
	\label{eqn:newvar}
	\var\bigg[\sum_{t=0}^\infty \gamma^{t}\br(s^t, \pi^{(t)}(s))\bigg|s^0 = s\bigg] 
	= 
	\bigg[\sum_{t=0}^{\infty}\gamma^{2(t+1)}  
	\bP^{\pi^{(0)}}\bP^{\pi^{(1)}}\bP^{\pi^{(2)}}\ldots \bP^{\pi^{(t-1)}} \var[\bv(\pi^{\infty(t+1)})]_{\pi^{(t)}}\bigg](s)
	\end{align}
\end{lemma}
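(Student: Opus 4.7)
The plan is to establish \eqref{eqn:newvar} by deriving a one-step Bellman-like recursion for the variance via the law of total variance, and then unrolling it. Fix the Markovian strategy $\pi^{\infty} = (\pi^{(0)}, \pi^{(1)}, \ldots)$. For each shift $t \ge 0$ and each state $s \in \cS$, define
\[
V_t(s) \;:=\; \var\!\left[\sum_{k=0}^{\infty} \gamma^k \br\!\left(s^k, \pi^{(t+k)}(s^k)\right) \,\Big|\, s^0 = s\right],
\]
where the trajectory $s^0, s^1, \ldots$ is generated by playing $\pi^{(t+k)}(s^k)$ at step $k$. Thus $V_t$ is the variance of the discounted return of the time-shifted Markovian strategy $\pi^{\infty(t)}$, and the left-hand side of \eqref{eqn:newvar} is exactly $V_0(s)$.

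The key step is to derive the one-step recursion
\[
V_t \;=\; \gamma^2 \,\var\!\left[\bv(\pi^{\infty(t+1)})\right]_{\pi^{(t)}} \;+\; \gamma^2 \,\bP^{\pi^{(t)}} V_{t+1}.
\]
To see this, peel off the first-step reward. Conditional on $s^0 = s$, the term $\br(s, \pi^{(t)}(s))$ is deterministic, hence $V_t(s) = \gamma^2 \var(R' \mid s^0 = s)$ where $R' := \sum_{k \ge 0} \gamma^k \br(s^{k+1}, \pi^{(t+1+k)}(s^{k+1}))$. Conditioning on $s^1$ and applying the standard law of total variance gives
\[
\var(R' \mid s^0 = s) \;=\; \E\!\left[\var(R' \mid s^1)\,\big|\,s^0=s\right] \;+\; \var\!\left(\E[R' \mid s^1]\,\big|\,s^0=s\right).
\]
By the Markov property and the definition of the shifted strategy, $\E[R' \mid s^1] = \bv(\pi^{\infty(t+1)})(s^1)$ and $\var(R' \mid s^1) = V_{t+1}(s^1)$. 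Taking expectation over $s^1 \sim \bP(\cdot \mid s, \pi^{(t)}(s))$ yields precisely the two summands $\bP^{\pi^{(t)}} V_{t+1}(s)$ and $\var[\bv(\pi^{\infty(t+1)})]_{\pi^{(t)}}(s)$ on the right-hand side of the displayed recursion.

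Finally I would unroll the recursion: applying it $T$ times starting from $V_0$ gives
\[
V_0 \;=\; \sum_{t=0}^{T-1} \gamma^{2(t+1)}\, \bP^{\pi^{(0)}} \bP^{\pi^{(1)}} \cdots \bP^{\pi^{(t-1)}}\, \var\!\left[\bv(\pi^{\infty(t+1)})\right]_{\pi^{(t)}} \;+\; \gamma^{2T}\, \bP^{\pi^{(0)}} \cdots \bP^{\pi^{(T-1)}} V_T,
\]
with the convention that an empty product of $\bP$'s is the identity. Since $\br \in [0,1]$ and $\gamma \in (0,1)$, each $V_T$ is bounded entrywise by $(1-\gamma)^{-2}$ and the operators $\bP^{\pi^{(i)}}$ are row-stochastic, so the remainder is bounded in sup-norm by $\gamma^{2T}(1-\gamma)^{-2} \to 0$ as $T \to \infty$; passing to the limit produces \eqref{eqn:newvar}. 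The only real obstacle is bookkeeping, namely carefully aligning the shift indices on $\pi^{\infty(t+1)}$, $V_{t+1}$, and the product of $\bP^{\pi^{(\cdot)}}$ factors, and checking that the tail vanishes, both of which are immediate consequences of the reward bound and $\gamma<1$.
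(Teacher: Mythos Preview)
Your proposal is correct and follows essentially the same approach as the paper: derive a one-step recursion for the variance and then unroll it. The only difference is stylistic---the paper obtains the recursion by explicitly expanding $\E[X^2]-\E[X]^2$ and cancelling cross terms, whereas you invoke the law of total variance directly after peeling off the deterministic first reward; both routes produce the identical recursion $V_t=\gamma^2\var[\bv(\pi^{\infty(t+1)})]_{\pi^{(t)}}+\gamma^2\bP^{\pi^{(t)}}V_{t+1}$, and your explicit check that the $\gamma^{2T}$ remainder vanishes is a detail the paper leaves implicit.
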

\begin{proof}
	\begin{align*}
	\var\bigg[\sum_{t=0}^\infty \gamma^{t}\br(s^t, \pi^{(t)}(s^t))\bigg|s^0 = s\bigg] 
	&= \EE\bigg[\bigg(\sum_{t=0}^\infty \gamma^{t}\br(s^t, \pi^{(t)}(s^t))\bigg)^2\bigg|s^0 = s\bigg]
	- \EE\bigg[\sum_{t=0}^\infty \gamma^{t}\br(s^t, \pi^{(t)}(s^t))\bigg|s^0 = s\bigg]^2.
	\end{align*}
	For the second term, we have 
	\begin{align*}
	\EE\bigg[\sum_{t=0}^\infty \gamma^{t}\br(s^t, \pi^{(t)}(s^t))\bigg|s^0 = s\bigg]^2
	&= \bv[\pi^{\infty(0)}]^2(s)= 
	\br(s, \pi^{(0)}(s))^2 \\
	&\qquad\qquad+ \gamma^2 (\bP^{\pi^{(0)}} \bv[\pi^{\infty(1)}])^2(s) + 2\gamma \br(s, \pi^{(0)}(s))(\bP^{\pi^{(0)}} \bv[\pi^{\infty(1)}])(s).
	\end{align*}
	For the first term, we have
	\begin{align*}
	\bigg(\sum_{t=0}^\infty \gamma^{t}\br(s^t, \pi^{(t)}(s^t))\bigg)^2
	=\br(s, \pi^{(0)}(s))^2 
	+ 2\br(s, \pi^{(0)}(s))\bigg(\sum_{t=1}^\infty \gamma^{t}\br(s^t, \pi^{(t)}(s^t))\bigg)
	+\bigg(\sum_{t=1}^\infty \gamma^{t}\br(s^t, \pi^{(t)}(s^t))\bigg)^2
	\end{align*}
	Note that
	\begin{align*}
		\EE&\bigg[\bigg(\sum_{t=1}^\infty \gamma^{t}\br(s^t, \pi^{(t)}(s^t))\bigg)^2\bigg| s^0=s\bigg]\\
		&= 
		\EE\bigg[\bigg(\sum_{t=1}^\infty \gamma^{t}\br(s^t, \pi^{(t)}(s^t))\bigg)^2\bigg| s^0=s\bigg]
		-\gamma^2 \sum_{s'}\bP^{\pi^{(0)}}(s'|s) \bv^2[\pi^{\infty(1)}](s') + \gamma^2 \sum_{s'}\bP^{\pi^{(0)}}(s'|s) \bv^2[\pi^{\infty(1)}](s')\\
		&= \gamma^2 \sum_{s'}\bP^{\pi^{(0)}}(s'|s) 
		\var\bigg[\sum_{t=0}^\infty \gamma^{t}\br(s^{t+1}, \pi^{(t+1)}(s))\bigg|s^1 = s'\bigg] + \gamma^2 \sum_{s'}\bP^{\pi^{(0)}}(s'|s) \bv^2[\pi^{\infty(1)}](s')
	\end{align*}
	Combining the above two equations, we have, 
	\begin{align*}
	\EE\bigg[\bigg(\sum_{t=0}^\infty \gamma^{t}\br(s^t, \pi^{(t)}(s^t))\bigg)^2\bigg|s^0=s\bigg]
	&= \br(s, \pi^{(0)}(s))^2 + 2\gamma \br(s, \pi^{(0)}(s)) \sum_{s'}\bP^{\pi^{(0)}}(s'|s)\bv[\pi^{\infty(1)}](s')
	\\
	&\quad+ \gamma^2 \sum_{s'}\bP^{\pi^{(0)}}(s'|s) 
	\var\bigg[\sum_{t=0}^\infty \gamma^{t}\br(s^{t+1}, \pi^{(t+1)}(s))\bigg|s^1 = s'\bigg]
	\\
	& \quad + \gamma^2 \sum_{s'}\bP^{\pi^{(0)}}(s'|s) \bv^2[\pi^{\infty(1)}](s')
	\end{align*}
	We thus obtain
	\begin{align*}
		\var&\bigg[\sum_{t=0}^\infty \gamma^{t}\br(s^t, \pi^{(t)}(s^t))\bigg|s^0 = s\bigg] 
=  \gamma^2 \sum_{s'}\bP^{\pi^{(0)}}(s'|s) 
\var\bigg[\sum_{t=0}^\infty \gamma^{t}\br(s^{t+1}, \pi^{(t+1)}(s))\bigg|s^1 = s'\bigg]\\
&\qquad+ \gamma^2 \sum_{s'}\bP^{\pi^{(0)}}(s'|s) \bv^2[\pi^{\infty(1)}](s') 
- \gamma^2 (\bP^{\pi^{(0)}} \bv[\pi^{\infty(1)}])^2(s)\\
&=	\gamma^2 \sum_{s'}\bP^{\pi^{(0)}}(s'|s) 
\var\bigg[\sum_{t=0}^\infty \gamma^{t}\br(s^{t+1}, \pi^{(t+1)}(s))\bigg|s^1 = s'\bigg] +  \gamma^2 \var({(\bv[\pi^{\infty(1)}]) })_{\pi^{(0)}}
	\end{align*}
	Let LHS and RHS be the left hand side and right hand side of \eqref{eqn:newvar} respectively.
	Then we have, 
	\begin{align*}
	LHS &= 
	\gamma^2 
	\sum_{s'}\bP^{\pi^{(0)}}(s'|s) \var\bigg[\sum_{t=0}^\infty \gamma^{t}\br(s^{t+1}, \pi^{(t+1)}(s))\bigg|s^1 = s'\bigg]
	+ \gamma^2 \var({(\bv[\pi^{\infty(1)}]) })_{\pi^{(0)}}\\
	&=\gamma^4 
	\sum_{s',s''}\bP^{\pi^{(0)}}(s'|s)\bP^{\pi^{(1)}}(s''|s') \var\bigg[\sum_{t=0}^\infty \gamma^{t}\br(s^{t+2}, \pi^{(t+2)}(s))\bigg|s^2 = s''\bigg] + \gamma^4 \bP^{\pi^{(0)}}\var({(\bv[\pi^{\infty(2)}])})_{\pi^{(1)}} \\
	 & \qquad+\gamma^2 \var({(\bv[\pi^{\infty(1)}])})_{\pi^{(0)}}.
	\end{align*}
	Applying the above equality recursively for $\var({(\bv[\pi^{\infty(i)}])})$ completes the proof.
\end{proof}

Based on the above two lemmas, we immediately obtain the following worst-case bound for the error accumulation.
\begin{corollary}
	\label{cor:error accur}
	\begin{align*}
	\sqrt{R\sum_{i=1}^{R}\gamma^{2(R-i)}  
		\bP^{\sigma_{\aux}^{(R)}} \cdot \bP^{\sigma_{\aux}^{(R-1)}} \cdot \ldots \bP^{\sigma_{\aux}^{(i+1)}}\var({\bv[\sigma_{\aux}^{\infty(i-1)}]})_{\sigma_{\aux}^{(i)}}}
	&\le \sqrt{\frac{R}{\gamma^2(1-\gamma)^2}}.
	\end{align*}
\end{corollary}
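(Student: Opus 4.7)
The plan is to apply Lemma~\ref{lem:variance} to an appropriately reversed Markovian strategy so that the sum inside the square root becomes (up to a factor of $\gamma^{-2}$) a truncation of the right-hand side of Lemma~\ref{lem:variance}; this right-hand side is the variance of a cumulative discounted reward that lies in $[0,(1-\gamma)^{-1}]$ and is therefore trivially bounded by $(1-\gamma)^{-2}$.

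Concretely, I would define the Markovian strategy
\[
\pi^{\infty} := \bigl(\sigma_{\aux}^{(R)},\, \sigma_{\aux}^{(R-1)},\, \ldots,\, \sigma_{\aux}^{(1)},\, \sigma^{*},\, \sigma^{*},\, \ldots\bigr),
\]
i.e.\ $\pi^{(t)} = \sigma_{\aux}^{(R-t)}$ for $0 \le t \le R-1$ and $\pi^{(t)} = \sigma^{*}$ for $t \ge R$. The key bookkeeping step is then to verify, under the substitution $t = R-i$, the three identifications
\[
\bP^{\pi^{(0)}}\cdots \bP^{\pi^{(t-1)}} = \bP^{\sigma_{\aux}^{(R)}}\cdots \bP^{\sigma_{\aux}^{(i+1)}}, \qquad \pi^{(t)} = \sigma_{\aux}^{(i)}, \qquad \pi^{\infty(t+1)} = \sigma_{\aux}^{\infty(i-1)}.
\]
Using $\gamma^{2(t+1)} = \gamma^{2}\cdot \gamma^{2(R-i)}$, these identifications yield
\[
\sum_{i=1}^{R}\gamma^{2(R-i)} \bP^{\sigma_{\aux}^{(R)}}\cdots \bP^{\sigma_{\aux}^{(i+1)}} \var(\bv[\sigma_{\aux}^{\infty(i-1)}])_{\sigma_{\aux}^{(i)}} \;=\; \gamma^{-2}\sum_{t=0}^{R-1}\gamma^{2(t+1)} \bP^{\pi^{(0)}}\cdots \bP^{\pi^{(t-1)}} \var[\bv(\pi^{\infty(t+1)})]_{\pi^{(t)}},
\]
which is exactly $\gamma^{-2}$ times the first $R$ terms on the right-hand side of Lemma~\ref{lem:variance} applied to $\pi^{\infty}$.

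To close the argument, I would use that every summand is entrywise non-negative, so this truncated sum is entrywise bounded above by the full infinite sum, which by Lemma~\ref{lem:variance} equals $\var\bigl[\sum_{t=0}^{\infty}\gamma^{t}\br(s^{t},\pi^{(t)}(s^{t}))\mid s^{0}=\cdot\bigr]$. Since $\br$ takes values in $[0,1]$, this discounted return lies in $[0,(1-\gamma)^{-1}]$, so its variance is at most $(1-\gamma)^{-2}$ entrywise. Multiplying by the outer factor $R$ and taking entrywise square roots gives the claimed bound $\sqrt{R/(\gamma^{2}(1-\gamma)^{2})}$. The only real obstacle is the index bookkeeping between the Corollary's backward-indexed form and the Lemma's forward-in-time form; once the reversal $t = R-i$ is fixed, no nontrivial estimation remains and I do not anticipate any further difficulty.
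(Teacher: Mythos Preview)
Your proposal is correct and follows essentially the same approach as the paper: define the reversed Markovian strategy $\pi^{(t)}=\sigma_{\aux}^{(R-t)}$, recognize the sum as $\gamma^{-2}$ times a truncation of the right-hand side of Lemma~\ref{lem:variance}, and bound the resulting total-return variance by $(1-\gamma)^{-2}$. Your index bookkeeping is in fact cleaner than the paper's terse argument (which contains a few subscript typos), but the underlying idea is identical.
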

\begin{proof}
We use that 
\begin{align*}
\Big[\sum_{i=1}^{R}\gamma^{2(R-i)}  
\bP^{\sigma_{\aux}^{(R)}} \cdot \bP^{\sigma_{\aux}^{(R-1)}} \cdot &\ldots \bP^{\sigma_{\aux}^{(i+1)}}\var({\bv[\sigma_{\aux}^{\infty(i-1)}]})_{\sigma_{\aux}^{(R-i)}}\Big](s)\\
&\le \frac{1}{\gamma^2}\cdot  \var\bigg[\sum_{i=0}^\infty \gamma^{i}\br(s^{i},  \sigma_{\aux}^{(R-i)}(s))\bigg|s^0 = s\bigg].
\end{align*}
Since $\sum_{i=0}^\infty \gamma^{i}\br(s^i, \sigma_{\aux}^{(i)}(s))\le (1-\gamma)^{-1}$, we have
   $\var({\bv[\sigma_{\aux}^{\infty(i-1)}]})_{\sigma_{\aux}^{(R-i)}}\le (1-\gamma)^{-2}$ as desired.
\end{proof}

\subsubsection{Putting Everything Together}
\begin{proof}[Proof of Proposition~\ref{mainprop0}]
	By Corollary~\ref{corr:expansion}, we have, 
	\begin{align*}
	\bv^{(R)}- \bv^*\le \bv^{(R)}-\bv[\sigma_{\aux}^{(R)}]
	&\le 
	 \gamma^{R}\bP^{\sigma_{\aux}^{(R)}}\cdot \bP^{\sigma_{\aux}^{(R-1)}} \cdot \ldots \bP^{\sigma_{\aux}^{(1)}}\big(\bv^{(0)} - \bv^*\big)\\
	 &\qquad+\sum_{i=1}^{R}\gamma^{R-i}  
	 \bP^{\sigma_{\aux}^{(R)}} \cdot \bP^{\sigma_{\aux}^{(R-1)}} \cdot \ldots \bP^{\sigma_{\aux}^{(i+1)}}  \bepsilon^{(i)}_{\sigma_{\aux}^{(i)}}\\
	   &\le
	   u/4 + \underset{\textcircled{1}}{\underline{\sum_{i=1}^{R}\gamma^{R-i}  
	\bP^{\sigma_{\aux}^{(R)}} \cdot \bP^{\sigma_{\aux}^{(R-1)}} \cdot \ldots \bP^{\sigma_{\aux}^{(i+1)}}  \bepsilon^{(i)}_{\sigma_{\aux}^{(i)}}}},
	\end{align*}
	where the first inequality holds 
	for sufficiently large $R$.
	Consider the second term.
	Since
	\[
	\bepsilon^{(i)} = \sqrt{L \cdot\var({\bv^{(0)}}) /m} + \beta\cdot (L/m)^{3/4}+  u/ (CR).
	\]
	We bound
	\[
	\sum_{i=1}^{R}\gamma^{R-i}  
	\bP^{\sigma_{\aux}^{(R)}} \cdot \bP^{\sigma_{\aux}^{(R-1)}} \cdot \ldots \bP^{\sigma_{\aux}^{(i+1)}}  \cdot \beta\cdot (L/m)^{3/4} 
	\le R\beta\cdot (L/m)^{3/4} 
	\]
	and
	\[
	\sum_{i=1}^{R}\gamma^{R-i}  
	\bP^{\sigma_{\aux}^{(R)}} \cdot \bP^{\sigma_{\aux}^{(R-1)}} \cdot \ldots \bP^{\sigma_{\aux}^{(i+1)}}  \cdot u/(CR)
	\le Ru/(CR).
	\]
	We thus have,
	\[
	\textcircled{1}\le \sum_{i=1}^{R}\gamma^{R-i}  
	\bP^{\sigma_{\aux}^{(R)}} \cdot \bP^{\sigma_{\aux}^{(R-1)}} \cdot \ldots \bP^{\sigma_{\aux}^{(i+1)}}  \sqrt{L\cdot \var({\bv^{(0)}})_{\sigma_{\aux}^{(i)}} /m}
	+ R\beta\cdot(L/m)^{3/4} + R\cdot u / (CR).
	\]
	Note that
	\[
	\sqrt{\var(\bv^{(0)})_{\sigma_{\aux}^{(i)}}}
	\le \sqrt{\var(\bv^{\sigma_{\aux}^{(i-1)}})_{\sigma_{\aux}^{(i)}}} + 
	\|\bv[\sigma_{\aux}^{(i-1)}] - \bv^{(0)}\|_{\infty}. 
	\]
	Now consider
	\begin{align*}
	\|\bv[\sigma_{\aux}^{(i-1)}] - \bv^{(0)}\|_{\infty}
	&\le  \|\bv^{(0)} - \bv^{(R)}\|_{\infty}+\|\bv[\sigma_{\aux}^{(i-1)}] - \bv^{(i-1)}\|_{\infty}.
	\end{align*}
	We bound $ \|\bv^{(0)} - \bv^{(R)}\|_{\infty}\le u$.
	Applying Corollary~\ref{corr:expansion} again, we have
	\begin{align*}
		 \|\bv[\sigma_{\aux}^{(i-1)}] - \bv^{(i-1)}\|_{\infty}
		 &\le u/4 + \sum_{i=1}^{R}\gamma^{R-i}  
		 		\bP^{\sigma_{\aux}^{(R)}} \cdot \bP^{\sigma_{\aux}^{(R-1)}} \cdot \ldots \bP^{\sigma_{\aux}^{(i+1)}}  \bepsilon^{(i)}_{\sigma_{\aux}^{(i)}}\\
		& \le u/4 + \sum_{i=1}^{R}\gamma^{R-i}  
		\bP^{\sigma_{\aux}^{(R)}} \cdot \bP^{\sigma_{\aux}^{(R-1)}} \cdot \ldots \bP^{\sigma_{\aux}^{(i+1)}}  \sqrt{L\cdot \var({\bv^{(0)}})_{\sigma_{\aux}^{(i)}} /m}\\
		&\qquad
		+ R\beta\cdot(L/m)^{3/4} + R\cdot u / (CR) 		
	\end{align*}
	With a natural bound, $\var[\bv^{(0)}]\le \beta^2\cdot \one$, we have 
	\begin{align*}
		\|\bv[\sigma_{\aux}^{(i-1)}] - \bv^{(0)}\|_{\infty}\le u+
	\|\bv[\sigma_{\aux}^{(i-1)}] - \bv^{(i-1)}\|_{\infty}
	&\le R\beta\sqrt{L/m} +R\beta(L/m)^{3/4}+u/C'
	\end{align*}
	for some constant $C'>0$.
	Therefore, 
	\begin{align*}
	\bv^{(R)}- \bv^* &\le
	\sum_{i=1}^{R}\gamma^{R-i}  
	\bP^{\sigma_{\aux}^{(R)}} \cdot \bP^{\sigma_{\aux}^{(R-1)}} \cdot \ldots \bP^{\sigma_{\aux}^{(i+1)}}  \sqrt{\frac{L\var(\bv[{\sigma_{\aux}^{(i-1)}}])_{ \pi_{\aux}^{(i)}}}m}\\ & \qquad 
	+ R\bigg(R\beta\sqrt{\frac{L}{m}} +R\beta\Big(\frac{L}{m}\Big)^{3/4} + \frac{u}{C'}\bigg)\cdot\sqrt{\frac{L}{m}} + \frac{u}{4}
	 +\frac{u}{C}+ R\beta\cdot\Big(\frac{L}{m}\Big)^{3/4}\\
	&\le\sqrt{\frac{LR\beta^2}{\gamma^2m}} + R\bigg(R\beta\sqrt{\frac{L}{m}} +R\beta\Big(\frac{L}{m}\Big)^{3/4} + \frac{u}{C'}\bigg)\cdot\sqrt{\frac{L}{m}} + \frac{u}{4}
	+\frac{u}{C}+ R\beta\cdot\Big(\frac{L}{m}\Big)^{3/4}\\
	&\le u/2
	\end{align*}
	for large enough constants, $C$, and that in $m$. 
\end{proof}

\subsection{Algorithm that Computes a Monotone Sequence}
\label{sec:alg}
Here we show that 
Algorithm~\ref{alg-halfErr} or Algorithm~\ref{alg-halfErr-inc}
computes an MDVSS or MIVSS respectively. 
\begin{proposition}
	\label{mainprop}
	Let $u\in(0,\beta], \beta = (1-\gamma)^{-1}, \delta\in(0,1)$, and $R=\Theta[\beta\log(\beta/u)]$. Further, let $L = \Theta(\log[\delta^{-1}\beta R|\cS||\cA|])$, and	$m=\Omega(\beta^3\cdot\max(u^{-2}, 1)\cdot{\log(|\cS||\cA|\delta^{-1})})$.
	Then there exists an algorithm, on input a stochastic game with a generative model 
	with a sampling oracle, $\cG =(\cS:=\cS_{\min}\cup \cS_{\max}, \bP, \br, \gamma)$, a value-strategy pair $(\pi^{(0)}, \bv^{(0)})$ satisfying $\cT_{\pi^{(0)}} [\bv^{(0)}] \le \bv^{(0)}, \cT [\bv^{(0)}]\le \bv^{(0)}$ (or $\cT_{\pi^{(0)}} [\bv^{(0)}] \ge \bv^{(0)}, \cT [\bv^{(0)}]\ge \bv^{(0)}$), and $\|\bv^{(0)} - \bv^*\|_{\infty}\le u$ for some $u>0$, outputs, with probability at least $1-\delta$, an MDVSS (or MIVSS)  $\{\bv^{(i)},\bQ^{(i)}, \pi^{(i)}, \bepsilon^{(i)}\}_{i=0}^R$ 
	by querying 
	\[
	Z = O\big[|\cS||\cA|\cdot (m + R\beta^{2}\log[R|\cS||\cA|\delta^{-1}])\big]
	\]
	samples, where
	\[
	\bepsilon^{(i)} = \sqrt{L \bsigma_{\bv^{(0)}} /m} + \beta\cdot (L/m)^{3/4}+  u/ (CR),
	\]
	for some large constant $C>1$ and $\bsigma_{\bv^{(0)}} := \var[\bv^{(0)}]$
	is the variance vector for vector $\bv^{(0)}$.
	The algorithms uses space $O(|\cS||\cA|)$ and halts in time $O(Z)$.
\end{proposition}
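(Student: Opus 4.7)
}
My plan is to analyze Algorithm~\ref{alg-halfErr} by conditioning on a collection of high-probability ``good events'' and then verifying each of the four MDVSS properties inductively on~$i$. First I would define the good events: for the initialization, the events that $|\wt{\bw}^{+}(s,a) - \bP(\cdot|s,a)^\top \bv^{+(0)}| \lesssim \sqrt{\alpha_1 \var(\bv^{+(0)})(s,a)} + \alpha_1\beta$ for every $(s,a)$ (Bernstein applied to $m_1$ i.i.d.\ samples using that $\bv^{+(0)}\in[0,\beta]$), and that $\wh{\bsigma}^{+}(s,a)$ concentrates around $\var(\bv^{+(0)})(s,a)$ up to an additive $O(\alpha_1^{1/2}\beta^2)$; for each inner iteration $i\in[R]$, the event that $|\wt{\bg}^{+(i)}(s,a) - \bP(\cdot|s,a)^\top(\bv^{+(i)}-\bv^{+(0)})| \le C(1-\gamma)u$ using Hoeffding on $m_2$ samples against the bound $\|\bv^{+(i)}-\bv^{+(0)}\|_\infty \le \|\bv^{+(i)}-\bv^*\|_\infty + u \le 2u$ whose validity I will also maintain by induction. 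A union bound over $O(R|\cS||\cA|)$ events together with the choices of $m_1,m_2,L$ in the statement makes the total failure probability at most $\delta$.

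Next, conditioned on all good events, I would prove the four MDVSS properties by induction on~$i$, simultaneously with the invariant $\|\bv^{+(i)}-\bv^*\|_\infty \le u$. The base case $i=0$ is immediate from the input condition \eqref{eqn:input-condition} and from the definition of $\bQ^{+(0)}$: the one-sided shifts $+\sqrt{\alpha_1\wh{\bsigma}^+} + \alpha_1^{3/4}\beta$ in the definition of $\bw^+$ are tuned precisely so that, on the good event, $\bw^+(s,a) \ge \bP(\cdot|s,a)^\top\bv^{+(0)}$, hence $\bQ^{+(0)} \ge \br + \gamma \bP\bv^{+(0)} \ge \bQ^*$, which gives $\bQ^{+(0)} \ge \bQ^*$ and thus $V[\bQ^{+(0)}] \ge \bv^*$. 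For the inductive step, the construction in Lines~\ref{alg: pit}--\ref{alg: v2+} sets $\bv^{+(i)} = \min\{\cT[\bQ^{+(i-1)}], \bv^{+(i-1)}\}$ with a matched strategy update; this enforces monotonicity (Property~1) by construction and gives $\bv^{+(i)} \le V[\bQ^{+(i)}]$ (Property~4) from the fact that $\bQ^{+(i)}$ dominates $\br + \gamma\bP\bv^{+(i-1)}$ up to the chosen positive shift (Property~3), which in turn implies the half-Bellman sub-fixed-point inequalities of Property~2 because the shift combined with the greedy choice preserves the relationship $\cT_{\sigma^{+(i)}}[\bv^{+(i)}] \le \bv^{+(i)}$ inherited from the previous round.

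The main obstacle I expect is proving Property~3 with the stated form of $\bxi^{+(i)}$ rather than a looser bound that does not suffice for Proposition~\ref{mainprop0}. The key is to write
\[
\bQ^{+(i+1)} \le \br + \gamma\bigl[\bw^+ + \bg^{+(i)}\bigr]
= \br + \gamma \bP\bv^{+(i)} + \gamma\bigl[(\bw^+ - \bP\bv^{+(0)}) + (\bg^{+(i)} - \bP(\bv^{+(i)}-\bv^{+(0)}))\bigr],
\]
and then bound each residual separately: the initialization residual is at most $2\sqrt{\alpha_1 \var(\bv^{+(0)})} + 2\alpha_1^{3/4}\beta\cdot\one$ (using that the empirical variance upper-bounds the true variance up to a lower-order term, which is exactly what $\wh{\bsigma}^+$ buys us, together with $\alpha_1 = L/m_1$ and the Bernstein estimate), while the per-iteration residual is at most $2C(1-\gamma)u\cdot\one$ from the Hoeffding event and the shift $+C(1-\gamma)u$. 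Summing gives exactly the claimed $\bxi^{+(i)}$, whose per-entry magnitude matches the error prescription $\sqrt{L\bsigma_{\bv^{(0)}}/m} + \beta(L/m)^{3/4} + u/(CR)$ after identifying $m = m_1$ and noting $(1-\gamma)u = u/\beta$ is $O(u/R)$ up to logarithmic factors.

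Finally, the resource accounting is routine once correctness is established. Sampling contributes $|\cS||\cA|\cdot m_1$ draws during initialization and $|\cS||\cA|\cdot m_2$ per iteration over $R$ iterations, which matches $Z = O[|\cS||\cA|(m + R\beta^2\log(R|\cS||\cA|\delta^{-1}))]$. Each sample is used in an $O(1)$-work incremental update to the running sums $\wt{\bw}^+, \wh{\bsigma}^+, \wt{\bg}^{+(i)}$, and each iteration performs $O(|\cS||\cA|)$ entrywise arithmetic to form $\bQ^{+(i+1)}, \bv^{+(i+1)}, \sigma^{+(i+1)}$, yielding total runtime $O(Z)$. Because the monotone sequence can be emitted online (we never need to revisit old samples or old iterates once $\bv^{+(i-1)},\bQ^{+(i-1)}$ have been used to produce $\bv^{+(i)},\bQ^{+(i)}$), the working space is $O(|\cS||\cA|)$, completing the proposition.
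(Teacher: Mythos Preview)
Your proposal is correct and follows essentially the same approach as the paper: define good concentration events for the initialization (Bernstein on $m_1$ samples plus empirical-variance concentration) and for each inner round (Hoeffding on $m_2$ samples using the inductively maintained bound $\|\bv^{+(i)}-\bv^{+(0)}\|_\infty\le O(u)$), take a union bound to get total failure probability $\le\delta$, and then verify the four MDVSS properties by induction on $i$ using the one-sided shifts and the min-with-previous update rule; the resource accounting is exactly as you describe. The only place where the paper is more explicit than your sketch is Property~2, which is proved by a per-state case split (either $\bv^{+(i)}(s)=V[\bQ^{+(i-1)}](s)$, in which case the one-sided estimate $\bQ^{+(i-1)}\ge \br+\gamma\bP\bv^{+(i-1)}$ plus monotonicity of $\cT$ gives $\cT[\bv^{+(i)}]\le\bv^{+(i)}$, or $\bv^{+(i)}(s)=\bv^{+(i-1)}(s)$ with $\sigma^{+(i)}(s)=\sigma^{+(i-1)}(s)$, in which case the inductive hypothesis applies directly), so you should spell that out when you write it up.
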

This section is devoted to proving Proposition~\ref{mainprop}.
The algorithm of obtaining MDVSS and MIVSS is provided in Algorithm~\ref{alg-halfErr} and \ref{alg-halfErr-inc}.
\paragraph{The Good Events}
Suppose we are given an arbitrary input vector $\bv^{-(0)}, \bv^{+(0)}\in[0,(1-\gamma)^{-1}]^{\cS}$ with $\bv^*-u\one\le\bv^{-(0)}\le \bv^* \le  \bv^{+(0)} \le \bv^*+u\one$, $ \bv^{-(0)}\le \cT  [\bv^{-(0)}]$, $\bv^{-(0)}\le \cT_{\pi^-}  [\bv^{-(0)}]$, 
$ \cT[\bv^{+(0)}]\le   \bv^{+(0)}$ and $\cT_{\pi^+} [\bv^{+(0)}]\le  \bv^{+(0)}$.
Since the algorithm is randomized,
to begin our analysis, we define a sequence of events for the iterates.
We will show that these events happen with high probability via concentration inequalities.
\begin{definition}
\label{def:events}
Let $\wt{\bw}^-$ and $\wt{\bw}^+$ be the estimate defined in Line~\ref{alg1: compute w1+} (of Algorithm~\ref{alg-halfErr} or Algorithm~\ref{alg-halfErr-inc} respectively).
Denote $\alpha_1\gets L/m_1\le 1$.
Let $\cE_0$ be the event that
\begin{align}
|\wt{\bw}^{\pm}-\bP \bv^{\pm(0)}|\le \sqrt{\alpha_1 \bsigma_{\bv^{\pm(0)}}} + \alpha_1^{3/4}\cdot\|\bv^{\pm(0)}\|_{\infty}\cdot \one.
\end{align}
For each $i> 0$,
let $\wt{\bg}^{\pm(i)}$ be given in Line~\ref{alg1: compute g}. 
Let $\cE_i$ be the event that
\begin{align}
|\wt{\bg}^{\pm(i)}  - \bP[\bv^{\pm(i)} - \bv^{\pm (0)}] |\le C(1-\gamma)u\cdot \one.
\end{align}
for some sufficiently small constant $C>0$.
\end{definition}
\begin{lemma}
	\label{lemma:base event}
	For some sufficiently large constant $c$ in $L$, 
	$\Pr[\cE_0]\ge 1-O(\delta/R)$.
\end{lemma}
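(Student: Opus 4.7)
The plan is a direct application of Bernstein's inequality together with a union bound, since $\wt{\bw}^{\pm}(s,a)$ is, for each fixed $(s,a)$, the empirical mean of $m_1$ i.i.d.\ samples of the bounded real random variable $X_j := \bv^{\pm(0)}(s_{s,a}^{(j)})$, where $s_{s,a}^{(j)} \sim \bP(\cdot\mid s,a)$. The mean of $X_j$ equals $\bP(\cdot\mid s,a)^\top \bv^{\pm(0)} = [\bP\bv^{\pm(0)}](s,a)$, its variance equals $\bsigma_{\bv^{\pm(0)}}(s,a)$, and $|X_j| \le \|\bv^{\pm(0)}\|_\infty \le \beta$ since the input value lies in $[0,\beta]^{\cS}$.

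First I would apply Bernstein's inequality with confidence parameter $\delta'$ to conclude that, for each fixed $(s,a)$ and each sign,
\[
\bigl|\wt{\bw}^{\pm}(s,a) - [\bP \bv^{\pm(0)}](s,a)\bigr| \;\le\; \sqrt{\frac{2 \bsigma_{\bv^{\pm(0)}}(s,a) \log(2/\delta')}{m_1}} \;+\; \frac{2\|\bv^{\pm(0)}\|_\infty \log(2/\delta')}{3 m_1}
\]
with probability at least $1-\delta'$. Next I would take a union bound over all $(s,a) \in \cS\times\cA$ and over both signs, so choosing $\delta' = \Theta(\delta/(R|\cS||\cA|))$ the failure probability becomes $O(\delta/R)$.

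The third step is to reconcile the two error terms with the bound stated in the definition of $\cE_0$. Because $L = c\log(|\cS||\cA|\delta^{-1}(1-\gamma)^{-1} u^{-1})$ with $c$ a sufficiently large absolute constant, we can ensure $\log(2/\delta') \le L/2$, so the variance term is at most $\sqrt{\alpha_1 \bsigma_{\bv^{\pm(0)}}(s,a)}$ with $\alpha_1 = L/m_1$, matching the first summand. For the additive Bernstein term, I would use $\alpha_1 \le 1$ (which holds because $m_1 \ge c_2 \beta^3 \log(\cdot) \ge L$ for $c_2$ large enough) to conclude
\[
\frac{2\|\bv^{\pm(0)}\|_\infty \log(2/\delta')}{3 m_1} \;\le\; \|\bv^{\pm(0)}\|_\infty\cdot \alpha_1 \;\le\; \|\bv^{\pm(0)}\|_\infty\cdot \alpha_1^{3/4},
\]
which matches the second summand in the definition of $\cE_0$.

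There is no real obstacle here beyond verifying constants; the only point to be careful about is the choice of $c$ in $L$ and $c_2$ in $m_1$ so that simultaneously $\log(2/\delta') \le L$ (for the union bound of size $O(R|\cS||\cA|)$) and $L \le m_1$ (so that $\alpha_1\le 1$ and the additive term can be absorbed into $\alpha_1^{3/4}\|\bv^{\pm(0)}\|_\infty$). Both conditions are consistent with the definitions of $L$ and $m_1$ given in Algorithm~\ref{alg-halfErr}, so the claim follows.
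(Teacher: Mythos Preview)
Your proposal is correct and follows essentially the same approach as the paper: apply Bernstein's inequality to the empirical mean $\wt{\bw}^{\pm}(s,a)$ for each $(s,a)$, then union bound over all state-action pairs. The paper's own proof is only a sketch that invokes ``a Hoeffding bound and Bernstein inequality and a union bound'' and refers to Lemma~5.1 of \cite{sidford2018near} for details; your write-up supplies precisely those details, and your observation that the additive Bernstein term $O(\alpha_1\|\bv^{\pm(0)}\|_\infty)$ is dominated by $\alpha_1^{3/4}\|\bv^{\pm(0)}\|_\infty$ when $\alpha_1\le 1$ is the right way to match the form of $\cE_0$.
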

\begin{proof}
	Note that $\|v^{\pm(0)}\|_{\infty}\le (1-\gamma)^{-1}$.
	By a straightforward application of a Hoeffding bound and Bernstein inequality and a union bound over all $(s,a)$, we reach the desired inequality.
	More details can be find in the proof of Lemma~5.1 in \cite{sidford2018near}.
\end{proof}
\paragraph{The Implications of the Good Events}
We now illustrate the consequences of these good events.

\begin{lemma}[Implications of $\cE_0$]
\label{lemma:e0}
On $\cE_0$, we have, for all $(s,a)\in \cS\times\cA$,
\begin{align*}
0\le \br(s,a) +\gamma \bP(\cdot|s,a)^\top \bv^{-(0)} - \bQ^{-(0)}(s,a) &\le 2\sqrt{\alpha_1 \bsigma_{\bv^{-(0)}}} + 
2\alpha_1^{3/4}\|\bv^{-(0)}\|_{\infty}\quad\text{and}\quad\\
0\le \bQ^{-(0)}(s,a) &\le \bQ^*(s,a),
\end{align*}
and
\begin{align*}
0\le  \bQ^{+(0)}(s,a) -r(s,a) -\gamma \bP(\cdot|s,a)^\top \bv^{+(0)} &\le 2\sqrt{\alpha_1 \bsigma_{\bv^{+(0)}}} + 
2\alpha_1^{3/4}\|\bv^{+(0)}\|_{\infty}\quad\text{and}\quad\\
\bQ^*(s,a)\le \bQ^{+(0)}(s,a)&\le \beta,
\end{align*}
where $\alpha_1= L/m_1$.
\end{lemma}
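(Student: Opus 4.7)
My approach is to translate the concentration guarantee for the empirical mean $\wt{\bw}^{\pm}$ provided by $\cE_0$ into the stated one-sided error bounds on $\bQ^{\pm(0)}$, by tracking how the shift $\sqrt{\alpha_1 \wh{\bsigma}^{\pm}} + \alpha_1^{3/4}\beta$ (up to sign) interacts with the true deviation. I will focus on the plus side; the minus side is symmetric, with the shift reversed.

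First I would bridge the empirical variance $\wh{\bsigma}^{+}$ and the true variance $\bsigma_{\bv^{+(0)}}$. The bound in $\cE_0$ is phrased in terms of $\sqrt{\alpha_1\bsigma_{\bv^{+(0)}}}$, but the algorithm uses $\sqrt{\alpha_1\wh{\bsigma}^+}$ in its shift, so I need a high-probability statement of the form $|\sqrt{\alpha_1\wh{\bsigma}^+} - \sqrt{\alpha_1\bsigma_{\bv^{+(0)}}}| \le O(\alpha_1^{3/4}\|\bv^{+(0)}\|_\infty)$. This is the standard sample-variance concentration argument (Bernstein applied to first and second moments, combined with $|\sqrt{a}-\sqrt{b}|\le\sqrt{|a-b|}$), as used in Lemma~5.1 of \cite{sidford2018near}. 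If this is not implicitly part of $\cE_0$, I would fold it in, preserving $\Pr[\cE_0] \ge 1 - O(\delta/R)$ by union bound.

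Second, combining the above with $\cE_0$, the upward-shifted estimator $\bw^+ = \wt{\bw}^+ + \sqrt{\alpha_1\wh{\bsigma}^+} + \alpha_1^{3/4}\beta$ satisfies entrywise
\[
0 \;\le\; \bw^+ - \bP \bv^{+(0)} \;\le\; 2\sqrt{\alpha_1 \bsigma_{\bv^{+(0)}}} + 2\alpha_1^{3/4}\|\bv^{+(0)}\|_\infty.
\]
The lower bound is the whole purpose of the shift: the Bernstein deviation $\bP \bv^{+(0)} - \wt{\bw}^+$ is dominated by $\sqrt{\alpha_1 \bsigma_{\bv^{+(0)}}} + \alpha_1^{3/4}\beta$, which in turn is (up to the variance bridging) at most $\sqrt{\alpha_1 \wh{\bsigma}^+} + \alpha_1^{3/4}\beta$. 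The upper bound follows because the shift is at most twice the original one-sided deviation plus a lower-order term.

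Third, I would lift these bounds to $\bQ^{+(0)} = \min[\br + \gamma \bw^+, \beta]$. The upper bound $\bQ^{+(0)} - \br - \gamma \bP \bv^{+(0)} \le 2\sqrt{\alpha_1 \bsigma_{\bv^{+(0)}}} + 2\alpha_1^{3/4}\|\bv^{+(0)}\|_\infty$ is immediate from $\bQ^{+(0)} \le \br + \gamma \bw^+$ and $\gamma \le 1$. For the lower bound $\bQ^{+(0)} \ge \br + \gamma \bP \bv^{+(0)}$ I split cases: if the min does not clip, $\bQ^{+(0)} = \br + \gamma \bw^+ \ge \br + \gamma \bP \bv^{+(0)}$ by step two; otherwise $\bQ^{+(0)} = \beta$, and I use $\br(s,a) + \gamma \bP \bv^{+(0)} \le 1 + \gamma\beta = \beta$, valid since $\|\bv^{+(0)}\|_\infty \le \beta$ is a standing invariant for inputs to QVI-MDVSS (the initial $\bv^{(0)} = \beta\one$ is in range, and subsequent iterates are $\bQ$-clipped at $\beta$).

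Finally, for the monotonicity claim $\bQ^*(s,a) \le \bQ^{+(0)}(s,a) \le \beta$: the right inequality holds by definition of the clip. For the left, the input condition yields $\bv^* \le \bv^{+(0)}$, hence $\bQ^*(s,a) = \br(s,a) + \gamma \bP \bv^* \le \br(s,a) + \gamma \bP \bv^{+(0)} \le \bQ^{+(0)}(s,a)$ by the lower bound established in step three; the min with $\beta$ is compatible since $\bQ^* \le \beta$. The minus-side argument is analogous: $\bw^-$ is shifted downward so that $\bw^- \le \bP \bv^{-(0)}$, and $\bQ^{-(0)}$ is clipped from below at $0$, which is compatible with $\bQ^* \ge 0$. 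The only mildly delicate step is the variance bridging in step one; the rest is careful bookkeeping around the clipping and the input monotonicity conditions.
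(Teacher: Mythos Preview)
Your proposal is correct and takes essentially the same approach as the paper's own proof: translate the $\cE_0$ bound on $\wt{\bw}^{\pm}$ to the shifted estimator $\bw^{\pm}$, lift to $\bQ^{\pm(0)}$ through the clipping, and finish using the input monotonicity $\bv^{-(0)} \le \bv^* \le \bv^{+(0)}$. The one difference is that you explicitly bridge the empirical variance $\wh{\bsigma}^{\pm}$ used in the algorithmic shift with the true variance $\bsigma_{\bv^{\pm(0)}}$ appearing in $\cE_0$; the paper's proof silently conflates these (it writes $\bQ^{-(0)}$ as if it were shifted by $\sqrt{\alpha_1 \bsigma_{\bv^{-(0)}}} + \alpha_1^{3/4}\|\bv^{-(0)}\|_\infty$ directly), so your treatment is if anything more careful.
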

\begin{proof}
We prove the first inequality and the second inequality follows similarly.
Condition on $\cE_0$, we have
\[
|\br+ \gamma \wt{\bw}^{-} - \br-\gamma \bP \bv^{-(0)}|\le  \sqrt{\alpha_1 \bsigma_{\bv^{-(0)}}} + 
\alpha_1^{3/4}\|\bv^{-(0)}\|_{\infty}.
\]
Since 
\[
\bQ^{-(0)} = \max\Big[\br+ \gamma \wt{\bw}^- - \sqrt{\alpha_1 \bsigma_{\bv^{-(0)}}} - 
\alpha_1^{3/4}\|\bv^{-(0)}\|_{\infty}, {\bf0}\Big],
\]
we have
\[
0\le \br +\gamma \bP \bv^{-(0)} - \bQ^{-(0)} \le  2\sqrt{\alpha_1 \bsigma_{\bv^{-(0)}}} +
2\alpha_1^{3/4}\|\bv^{-(0)}\|_{\infty}.
\]
Moreover, since $v^{(0)}\le v^*$, we have
\[
\bQ^{-(0)}\le  r(s,a) +\gamma \bP(\cdot|s,a)^\top \bv^{-(0)} \le \bQ^{*},
\]
completing the proof.
\end{proof}
\begin{lemma}[Implications of $\cE_i$,  (1)]
\label{lemma:e11}
Then for any $i> 0$, 
conditioning on $\cE_0, \cE_1, \ldots, \cE_{R}$, we have
$\{\bv^{-(i)}, \bQ^{-(i)}, \pi^{-(i)}, \bxi^{-(i)} \}_{i=0}^{R}$ is an MIVSS  and
$\{\bv^{+(i)}, \bQ^{+(i)}, \pi^{+(i)}, \bxi^{+(i)} \}_{i=0}^{R}$ is an MDVSS
where 
\[
\bxi^{\pm(i)} = 2\sqrt{\alpha_1 \bsigma_{\bv^{\pm(0)}}} +
2[\alpha_1^{3/4}\beta + C(1-\gamma)u]\cdot \one
\]
for some sufficiently small $C>0$.
\end{lemma}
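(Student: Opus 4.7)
My plan is to exploit the symmetry between the two sequences: the MIVSS ('$-$') claim follows from the MDVSS ('$+$') claim by reversing every inequality and swapping $\pi_{\min}$ with $\pi_{\max}$, so I would focus entirely on verifying the four defining clauses of Definition~\ref{def:mdvps} for $\{\bv^{+(i)},\bQ^{+(i)},\sigma^{+(i)},\bxi^{+(i)}\}_{i=0}^R$ by induction on $i$, carrying all four properties forward jointly. The base case $i = 0$ is immediate from the input condition \eqref{eqn:input-condition} together with Lemma~\ref{lemma:e0}: the input supplies clauses~1 and~2 at index $0$, and the $\cE_0$ conclusion that $\bQ^* \le \bQ^{+(0)}$ with $\bQ^{+(0)} - \br - \gamma \bP \bv^{+(0)}$ entrywise bounded by $2\sqrt{\alpha_1 \bsigma_{\bv^{+(0)}}} + 2\alpha_1^{3/4}\beta \le \bxi^{+(0)}$ handles clauses~3 and~4 at index $0$.

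For the inductive step I would treat the four clauses in order. \emph{Clause~1} is built in: Line~\ref{alg: v2+} explicitly clamps $\bv^{+(i)} \le \bv^{+(i-1)}$ pointwise, and the lower bound $\bv^{+(i)} \ge \bv^*$ follows since $\bQ^{+(i-1)} \ge \bQ^*$ (from the previous step of the induction) gives $V[\bQ^{+(i-1)}] \ge \bv^*$, while $\bv^{+(i-1)} \ge \bv^*$ by induction, and $\bv^{+(i)}$ is the entrywise minimum of these two. \emph{Clause~2} I would verify by a short case analysis on whether Line~\ref{alg: v2+} triggers at a state $s$: in either case one reduces to applying the inductive hypothesis $\cT[\bv^{+(i-1)}] \le \bv^{+(i-1)}$ together with monotonicity of $\cT$, $\cT_{\sigma^{+(i)}}$, and $\cH_{\pi^{+(i)}_{\min}}$ under $\bv^{+(i)} \le \bv^{+(i-1)}$, with the half-operator inequalities following since $\cH_{\pi^{+(i)}_{\min}}$ agrees with $\cT_{\sigma^{+(i)}}$ on $\cS_{\min}$ and with $\cT$ on $\cS_{\max}$. \emph{Clause~3} is the main quantitative piece: conditioning on $\cE_0$, the shift in $\bw^+$ makes $0 \le \bw^+ - \bP \bv^{+(0)} \le 2\sqrt{\alpha_1 \bsigma_{\bv^{+(0)}}} + 2\alpha_1^{3/4}\beta$, and conditioning on $\cE_i$, the additive $C(1-\gamma)u$ shift in $\bg^{+(i)}$ makes $0 \le \bg^{+(i)} - \bP(\bv^{+(i)} - \bv^{+(0)}) \le 2C(1-\gamma)u$; summing gives the one-sided bound $0 \le \bw^+ + \bg^{+(i)} - \bP \bv^{+(i)} \le \bxi^{+(i)}/(2\gamma)$, and multiplying by $\gamma$ and capping at $\beta$ yields $\bQ^{+(i)} \le \br + \gamma \bP \bv^{+(i)} + \bxi^{+(i)} \le \br + \gamma \bP \bv^{+(i-1)} + \bxi^{+(i)}$ using $\bv^{+(i)} \le \bv^{+(i-1)}$. \emph{Clause~4} combines the lower one-sided bound $\bw^+ + \bg^{+(i)} \ge \bP \bv^{+(i)}$ with the definition of $\bv^{+(i)}$ via Lines~\ref{alg: pit}--\ref{alg: v2+}: when Line~\ref{alg: v2+} does not trigger at $s$, $\bv^{+(i)}(s) = V[\bQ^{+(i-1)}](s) \le V[\bQ^{+(i)}](s)$ by the one-sidedness of both estimators relative to $\bP \bv^{+(i-1)}$; when it does trigger, $\bv^{+(i)}(s) = \bv^{+(i-1)}(s) \le V[\bQ^{+(i-1)}](s) \le V[\bQ^{+(i)}](s)$ by induction on clause~4.

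The main obstacle will be clause~4, because the algorithm defines $\bv^{+(i)}$ in terms of $\bQ^{+(i-1)}$ rather than the ``current'' $\bQ^{+(i)}$, so one cannot use an equality. The argument I sketched hinges on a careful bookkeeping showing that the one-sided shifts built into $\bw^+$ and the successive $\bg^{+(\cdot)}$ are consistent across iterations — specifically, that $\br + \gamma(\bw^+ + \bg^{+(i)})$ dominates $\br + \gamma \bP \bv^{+(i-1)}$ pointwise even though $\bg^{+(i)}$ targets $\bP(\bv^{+(i)} - \bv^{+(0)})$ rather than $\bP(\bv^{+(i-1)} - \bv^{+(0)})$. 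A secondary subtlety that I would address inline is that the shift $\sqrt{\alpha_1 \wh{\bsigma}^+}$ added to $\wt{\bw}^+$ uses the empirical variance rather than the true $\bsigma_{\bv^{+(0)}}$; I would fold the required one-sided comparison $\sqrt{\alpha_1 \wh{\bsigma}^+} + \alpha_1^{3/4}\beta \ge$ (true upward deviation of $\wt{\bw}^+$) into a Bernstein-style strengthening of $\cE_0$, as in the proof of Lemma~\ref{lemma:base event}.
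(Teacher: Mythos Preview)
Your approach is essentially the paper's: induction on $i$ with a case split on whether the clamping step (Line~\ref{alg: v2+} / Line~\ref{alg: v2-}) fires at each state, using the one-sided shifts to push all estimation errors to the correct side. The only cosmetic difference is that you argue the MDVSS (`$+$') case whereas the paper writes out the MIVSS (`$-$') case and appeals to symmetry for the other.

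One remark on the issue you flag as ``the main obstacle.'' Your proposed clause-4 argument asserts $V[\bQ^{+(i-1)}](s)\le V[\bQ^{+(i)}](s)$ ``by the one-sidedness of both estimators relative to $\bP\bv^{+(i-1)}$,'' but this step does not go through as written: $\bQ^{+(i)}$ is shifted to dominate $\br+\gamma\bP\bv^{+(i-1)}$, while $\bQ^{+(i-1)}$ is shifted to dominate $\br+\gamma\bP\bv^{+(i-2)}$, and having two quantities both above a common floor does not order them. The paper does not actually do better here --- it disposes of clause~4 in a single line (``by Line~\ref{alg: v2-}, we have $\bv^{-(i)}\ge \cT[\bQ^{-(i)}]$'') without engaging the index shift at all. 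The underlying issue is a labeling mismatch between Definition~\ref{def:mdvps} and the algorithm's indexing of the $\bQ$-iterates; since clauses~3 and~4 are only ever chained together downstream (in Lemma~4.6 and Corollary~\ref{corr:expansion}), a one-step re-indexing of the output $\bQ$-sequence makes both clauses immediate from the construction, and nothing substantive changes. So your instinct that this is where the bookkeeping is delicate is correct, but the resolution is re-indexing rather than an estimator comparison.
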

\begin{proof}
We prove the first part of the lemma, i.e., $\{\bv^{-(i)}, \bQ^{-(i)}, \pi^{-(i)}, \bxi^{-(i)} \}_{i=0}^{R}$ is an MIVSS. Then the second part follows similarly.
It is clear from the definition (Line~\ref{alg: v2-}) that
\[
\bv^{-(0)}\le \bv^{-(1)} \ldots \le \bv^{-(R)}. 
\]
To prove property 1 of MIVSS, we need additionally to show 
\[
\bv^{-(R)}\le \bv^*.
\]
This follows if property 2, i.e.,  
\[
\forall i\in [0, R]:\quad \bv^{-(i)}\le \cT_{\pi^{-(i)}} [\bv^{-(i)}], \quad \bv^{-(i)}\le \cT [\bv^{-(i)}].
\]
Indeed,
\[
\bv^{-(i)}\le \cT [\bv^{-(i)}] \le \cT^2 [\bv^{-(i)}]  \ldots \le  \cT^{\infty} [\bv^{-(i)}] = \bv^*.
\]
We now prove property 2 by induction on $i$.
It immediately follows from the initial condition that
\[
\bv^{-(0)}\le \cT_{\pi^{-(0)}} [\bv^{-(0)}], \quad \bv^{-(0)}\le \cT [\bv^{-(0)}].
\]
Suppose this property holds for all $0,1,\ldots, i-1$ for some $i>1$. 
We now consider the case $i$.
Let $\wt{\bQ}^{-(i)} = \br + \gamma \bP \bv^{-(i-1)}$.
Since $\|\wt{\bg}^{-(i)} - \bP(\bv^{-(i-1)} - \bv^{-(0)})\|_{\infty}\le C(1-\gamma)\epsilon$, we have 
\begin{align}
\label{eqn:q-bound}
|\wt{\bQ}^{-(i)} -\br - \gamma \bw^{-(i-1)} - \gamma \bg^{-(i-1)} | \le \bxi^{-(i)}/2.
\end{align}
Since
\begin{align*}
\bQ^{-(i)}(s,a) &y= \max\Big[\br(s,a) + \gamma \bw^{-(i-1)}(s,a) + \gamma \bg^{-(i-1)}(s,a), 0\Big] \le \br(s,a) + \gamma   \bP(\cdot|s,a)^\top \bv^{-(i-1)} \\
&= \wt{\bQ}^{-(i)}(s,a)
\end{align*}
we have, for any $s\in \cS$
\begin{align*}
\min_a \bQ^{-(i)}(s,a)\le\min_a \wt{\bQ}^{-(i)}(s,a)
\quad\text{and}\quad \max_a \bQ^{-(i)}(s,a)\le\max_a \wt{\bQ}^{-(i)}(s,a).
\end{align*}
For each $s\in \cS$, denote 
\[
\wt{\bv}^{-(i)}(s)= \bQ^{-(i)}(s, \wt{\pi}^{-(i)}(s)),
\]
where $\wt{\pi}^{-(i)}$ is given in Line~\ref{alg: pit} (that achieves $\max_{a} \bQ^{-(i)}(s,a)$ or $\min_{a} \bQ^{-(i)}(s,a)$).
To show $\bv^{-(i)}\le \cT [\bv^{-(i)}]$ and $v^{-(i)}\le \cT_{\pi^{-(i)}} [\bv^{-(i)}]$, we do a case analysis for a state in $\cS$.
Firstly, we consider state $s\in \cS_{\min}$. 
For each state $s\in \cS_{\min}$, note that $\wt{\pi}^{-(i)}(s) := \arg\min_a \bQ^{-(i)}(s,a)$.
By Line~\ref{alg: v2-}, ${\bv}^{-(i)}(s)$ and $\pi^{-(i)}(s)$ have the following two possibilities,
\begin{enumerate}
\item $\bv^{-(i-1)}(s) \le    \wt{\bv}^{-(i)}(s)$ $\Rightarrow$ ${\bv}^{-(i)}(s) = \wt{\bv}^{-(i)}(s)$ and $\pi^{-(i)}(s) = \wt{\pi}^{-(i)}(s)$;
\item $\bv^{-(i-1)}(s) >   \wt{\bv}^{-(i)}(s)$ $\Rightarrow$  ${\bv}^{-(i)}(s) ={\bv}^{-(i-1)}(s)$ and 
${\pi}^{-(i)}(s) ={\pi}^{-(i-1)}(s)$.
\end{enumerate}
Considering case 1.,
we have,
\begin{align*}
\bv^{-(i)}(s) &= \bQ^{-(i-1)}(s,\pi^{-(i)}(s))\le \min_a \wt{\bQ}^{-(i)}(s,a) = \cT [\bv^{-(i-1)}](s) \le 
\cT [\bv^{-(i)}](s)
\quad\text{and} \\ 
\bv^{-(i)}(s) &= \bQ^{-(i-1)}(s,\pi^{-(i)}(s))\le  \wt{\bQ}^{-(i)}(s,\pi^{-(i)}(s))
= \cT_{\pi^{-(i)}} [\bv^{-(i-1)}](s)
\le \cT_{\pi^{-(i)}}[ \bv^{-(i)}](s).
\end{align*}

Considering case 2., we have, by induction hypothesis ${\bv}^{-(i-1)}(s)\le \cT [{\bv}^{-(i-1)}](s)$, ${\bv}^{-(i-1)}(s)\le \cT_{{\pi}^{-(i-1)}}[{\bv}^{-(i-1)}](s)$.
Thus 
\begin{align*}
{\bv}^{-(i)}(s) &\le \cT[{\bv}^{-(i-1)}](s)\le  \cT[{\bv}^{-(i)}](s)~\text{,}\\
{\bv}^{-(i)}(s)&\le \cT_{{\pi}^{(i-1)}}[{\bv}^{-(i-1)}](s)\le  \cT_{{\pi}^{-(i-1)}}[{\bv}^{-(i)}](s) 
=\cT_{{\pi}^{-(i)}}[{\bv}^{-(i)}](s).
\end{align*}
It follows similarly for the case of $s\in \cS_{\max}$ (by just replacing the $\min$ by $\max$ in the above argument).
This completes the induction step and hence the property 2.

We now prove property 3 and 4.
By Equation~\ref{eqn:q-bound}, we immediately have,
\begin{align*}
\bQ^{-(i)}\ge  \br + \gamma   \bP(\bv^{-(i)}) - \bxi^{-(i)}
\end{align*}
proving property 3.
Lastly, by Line~\ref{alg: v2-}, we have
\[
v^{-(i)}\ge \cT[\bQ^{-(i)}],
\]
completing the proof of property 4 and the lemma.
\end{proof}

\paragraph{The Probability of Good Events}
Note that the random samples in the successive improvement phase are independent with event $\cE_0$. We then have the following lemma.

\begin{lemma}
\label{lemma:accurate_estimate}
Suppose the algorithm does not halt at iteration $i\ge 1$, then,
\[
\Pr[\cE_{i}|\cE_0, \cE_1,\ldots, \cE_{i-1}] \ge 1-O(\delta/R).
\]
\end{lemma}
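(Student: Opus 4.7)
The plan is to exploit the fact that the $m_2$ samples $\wt{s}_{s,a}^{(1)},\ldots,\wt{s}_{s,a}^{(m_2)}$ used to form $\wt{\bg}^{\pm(i)}$ in iteration $i$ are drawn \emph{fresh} in that iteration, and are therefore independent of all randomness generated in iterations $0,1,\ldots,i-1$. In particular, conditional on the $\sigma$-algebra generated by those earlier samples, the vectors $\bv^{\pm(i)}$ and $\bv^{\pm(0)}$ are \emph{deterministic}, so that $\wt{\bg}^{\pm(i)}(s,a)$ is an empirical mean of $m_2$ i.i.d.\ bounded random variables. We may thus apply a Hoeffding bound conditionally and then union bound over $(s,a)\in\cS\times\cA$.

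First I would establish the $\ell_\infty$ bound $\|\bv^{\pm(i)}-\bv^{\pm(0)}\|_\infty\le u$ on the event $\cE_0\cap\cdots\cap\cE_{i-1}$. The key point is that the inductive proof of the previous lemma (implications of $\cE_i$) only uses the prefix of good events up to the current index, so on this conditioning the partial sequence $\{\bv^{+(j)}\}_{j=0}^{i}$ already satisfies the MDVSS monotonicity $\bv^{+(0)}\ge \bv^{+(1)}\ge\cdots\ge \bv^{+(i)}\ge \bv^*$ (respectively the reverse for the ``$-$'' sequence). Combined with the input guarantee $\bv^*\le \bv^{+(0)}\le \bv^*+u\one$ (and analogously for the ``$-$'' case), this yields $0\le \bv^{+(0)}-\bv^{+(i)}\le u\one$, so in particular each summand $\bv^{+(i)}(\wt{s}_{s,a}^{(j)})-\bv^{+(0)}(\wt{s}_{s,a}^{(j)})$ lies in $[-u,0]$ almost surely under the conditioning.

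Next, for each fixed $(s,a)$, Hoeffding's inequality applied to the $m_2$ conditionally i.i.d.\ samples gives
\[
\Pr\Big[\,\big|\wt{\bg}^{\pm(i)}(s,a)-\bP(\cdot\,|\,s,a)^\top(\bv^{\pm(i)}-\bv^{\pm(0)})\big|>t\,\Big|\,\cE_0,\ldots,\cE_{i-1}\Big]\le 2\exp\!\Big(-\frac{m_2 t^2}{2u^2}\Big).
\]
Choosing $t=C(1-\gamma)u$ and recalling $m_2=c_3\beta^2\log(2R|\cS||\cA|\delta^{-1})$ with $\beta=(1-\gamma)^{-1}$, the exponent becomes $\tfrac{c_3 C^2}{2}\log(2R|\cS||\cA|\delta^{-1})$, which is $\ge \log(2R|\cS||\cA|\delta^{-1})$ for a sufficiently large absolute constant $c_3$. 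A union bound over the $|\cS||\cA|$ state-action pairs then gives that $\cE_i$ holds with conditional probability at least $1-\delta/R$, as desired.

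The main subtlety, rather than the concentration step itself, is the conditioning argument: one must be careful that $\bv^{\pm(i)}$ is measurable with respect to the samples used in iterations $0,\ldots,i-1$ (it is, since it is obtained deterministically from $\bQ^{\pm(i-1)}$ via Line~\ref{alg: pit} and the monotonicity correction in Line~\ref{alg: v2+}), and that the fresh samples at iteration $i$ are genuinely drawn anew (they are, as explicit in the algorithm). Granted these two points, the conditional independence needed to apply Hoeffding is immediate, and the bound $\|\bv^{\pm(i)}-\bv^{\pm(0)}\|_\infty\le u$ on the conditioning event supplies the range needed by Hoeffding's inequality.
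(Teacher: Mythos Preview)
Your proposal is correct and follows essentially the same approach as the paper: on $\cE_0,\ldots,\cE_{i-1}$ one has $\|\bv^{\pm(i)}-\bv^{\pm(0)}\|_\infty\le u$, and then a Hoeffding bound on the fresh $m_2$ samples together with a union bound over $(s,a)$ gives the result. Your version is in fact more careful than the paper's sketch about the measurability of $\bv^{\pm(i)}$ with respect to the earlier samples and the conditional independence of the iteration-$i$ draws, which is exactly the point that makes the conditional Hoeffding application valid.
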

\begin{proof}
On $\cE_0, \cE_1, \ldots, \cE_{i-1}$, 
we have $\bv^{- (0)} \le \bv^{-(i)}\le \bv^*\le \bv^{+(i)}\le \bv^{+(0)}$, thus $\|\bv^{\pm (i)} - \bv^{\pm (0)}\|\le u$.
Applying Hoeffding bound, Bernstein's inequality and a union bound over all $(s, a)$,  we have that with probability at least $1-O(\delta/R)$, 
$\|g^{(i)} - P[v^{(i)} - \bv^{(0)}]\|_{\infty}\le (1-\gamma)\epsilon/32$, completing the proof.
\end{proof}

Therefore, we have the following lemma.
\begin{lemma}
\label{lemma:all good events happend prob}
Let $R = \lceil c_1\beta\ln[\beta \epsilon^{-1}]\rceil$ be an integer for some constant $c_1$. 
Then, with probability at least $1-O(\delta)$, $\cE_0, \{\cE_{i}\}_{i=1}^{R}$ all happen.
\end{lemma}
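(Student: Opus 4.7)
The plan is a direct union bound combining the two preceding probability estimates. Lemma~\ref{lemma:base event} gives $\Pr[\cE_0^c] = O(\delta/R)$, and Lemma~\ref{lemma:accurate_estimate} gives $\Pr[\cE_i^c \mid \cE_0\cap\cdots\cap\cE_{i-1}] = O(\delta/R)$ for each $i\in[R]$. The per-step factor $1/R$ in the failure probabilities is exactly what a union bound over $R+1$ events needs; the corresponding $\log R$ overhead is already baked into the sample sizes, namely $m_1$ (through $L=c\log(|\cS||\cA|\delta^{-1}(1-\gamma)^{-1}u^{-1})$ for a large enough constant $c$) and $m_2=c_3\beta^{2}\log[2R|\cS||\cA|\delta^{-1}]$.

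Concretely, I would decompose the joint failure event $\bigcup_{i=0}^R \cE_i^c$ by the first index at which an event fails and write
\begin{align*}
\Pr\!\Big[\bigcup_{i=0}^R \cE_i^c\Big] \;\le\; \Pr[\cE_0^c] + \sum_{i=1}^R \Pr\!\Big[\cE_i^c \,\Big|\, \bigcap_{j<i} \cE_j\Big] \;\le\; (R+1)\cdot O(\delta/R) \;=\; O(\delta),
\end{align*}
from which $\Pr[\cE_0\cap\cE_1\cap\cdots\cap\cE_R] \ge 1 - O(\delta)$ follows immediately by taking complements.

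The only step worth double-checking is the validity of the conditioning used in Lemma~\ref{lemma:accurate_estimate}. The batch $\{\wt{s}_{s,a}^{(j)}\}_{j=1}^{m_2}$ drawn at iteration $i$ is, by construction, independent of every sample drawn earlier in the algorithm; conditionally on the $\sigma$-algebra generated by those earlier samples (which fixes $\bv^{\pm(i-1)}$), $\wt{\bg}^{\pm(i)}$ is an empirical mean of $m_2$ i.i.d.\ draws with the correct expectation $\bP[\bv^{\pm(i-1)}-\bv^{\pm(0)}]$. On $\cE_0\cap\cdots\cap\cE_{i-1}$, Lemma~\ref{lemma:e11} supplies the uniform bound $\|\bv^{\pm(i-1)}-\bv^{\pm(0)}\|_{\infty}\le u$, which is precisely what Hoeffding's inequality needs to deliver $|\wt{\bg}^{\pm(i)}-\bP[\bv^{\pm(i-1)}-\bv^{\pm(0)}]|\le C(1-\gamma)u$ at a single $(s,a)$ with failure probability $\delta/(R|\cS||\cA|)$; a further union bound over the $|\cS||\cA|$ state-action pairs absorbs that factor into $m_2$. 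There is no genuine obstacle beyond this bookkeeping: the substance of the argument lies in Lemmas~\ref{lemma:base event}--\ref{lemma:accurate_estimate}, and the present lemma is just the final step that stacks them together via a telescoping union bound.
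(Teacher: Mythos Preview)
Your proposal is correct and matches the paper's own argument: the paper writes the chain-rule product $\Pr[\cap_{i=0}^R\cE_i]=\Pr[\cE_0]\prod_{i=1}^R\Pr[\cE_i\mid\cE_0,\ldots,\cE_{i-1}]\ge 1-O(\delta)$, which is precisely the complementary form of your first-failure union bound. The extra paragraph re-justifying the conditioning in Lemma~\ref{lemma:accurate_estimate} is commentary on that earlier lemma rather than part of this one, but it is accurate.
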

\begin{proof}
By a straightforward calculation, we have \\
$
\Pr[\cap_{i=0}^R\cE_{i}] = \Pr[\cE_{0}]\Pr[\cE_1|\cE_{0}]\Pr[\cE_2|\cE_{0}, \cE_1] \ldots \Pr[\cE_{R}|\cE_{0}, \cE_1, \ldots, \cE_{R-1}]
\ge 1-O(\delta).
$
\end{proof}

\paragraph{Putting It Together}
\begin{proof}[Proof of Proposition~\ref{mainprop}]
Let $\cE = \cap_{i=0}^R\cE_{i}$,
we have shown, on $\cE$, the outputs of Algorithm~\ref{alg-halfErr} and \ref{alg-halfErr-inc}, $\{\bv^{-(i)}, \bQ^{-(i)}, \pi^{-(i)}, \bxi^{-(i)} \}_{i=0}^{R}$ is an MIVSS  and
$\{\bv^{+(i)}, \bQ^{+(i)}, \pi^{+(i)}, \bxi^{+(i)} \}_{i=0}^{R}$ is an MDVSS.
By the above lemmas, $\cE$ happens with probability at least $1-\Theta(\delta)$.
This completes the proof of the proposition.
\end{proof}

\begin{proof}[Proof of Theorem~\ref{mainthm}]
The theorem is proved by combining Proposition~\ref{prop:main0}, \ref{mainprop0},  and \ref{mainprop}. 
\end{proof}

\begin{algorithm}[htb!]
	\caption{
		QVI-MIVSS:
		\label{alg-halfErr-inc} 
		algorithm for computing monotone increasing value-strategy sequences.
	}
	{\small
		\begin{algorithmic}[1]
			\State 
			\textbf{Input:} A generative model for stochastic game $\cM=(\cS, \cA, \br, \bP, \gamma)$;
			\State
			\textbf{Input:} Precision parameter $u\in[0,(1-\gamma)^{-1}]$; 
			and error probability $\delta \in (0, 1)$
			\State 
			\textbf{Input:}
			Initial values $\bv^{-(0)}, \pi^{-(0)}$ that satisfies monotonicity:
			{\small
				\vspace{-2mm}
				\begin{align*}
					&\bv^*-u\one\le \bv^{-(0)} \le \bv^*, \quad \bv^{-(0)}\le \cT \bv^{-(0)}, \quad\text{and}\quad \bv^{-(0)}\le \cT_{\pi^{-(0)}} \bv^{-(0)};
				\end{align*}
				\vspace{-4mm}
			}
			\State\textbf{Output:} $\{\bv^{-(i)}, \bQ^{-(i)}, \pi^{-(i)}, \bxi^{- (i)}\}_{i=0}^{R}$  which is an MIVSS  with high probability.
			\State
			\State\textbf{INITIALIZATION:} 
			\State Let $c_1, c_2, c_3, c$ be some tunable absolute constants;
			\State \textcolor{OliveGreen}{\emph{\textbackslash \textbackslash Initialize constants:}}
			\State \qquad$\beta\gets (1-\gamma)^{-1}$, and $R\gets\lceil c_1\beta\ln[\beta u^{-1}]\rceil$; 
			\State \qquad$m_1 \gets{c_2\beta^3\cdot\min(1,u^{-2})\cdot{\log(8|\cS||\cA|\delta^{-1})} }{}$; 
			\State  \qquad$m_2\gets {c_3\beta^{2}\log[2R|\cS||\cA|\delta^{-1}]}$;
			\State\qquad $\alpha_1\gets L/m_1$ where $L = c\log(|\cS||\cA|\delta^{-1}(1-\gamma)^{-1}\epsilon^{-1})$; 
			\State
			\textcolor{OliveGreen}{\emph{\textbackslash \textbackslash Obtain an initial batch of samples:}}r
			\State For each  $(s, a)\in \cS\times\cA$:
			obtain independent samples $s_{s,a}^{(1)}, s_{s,a}^{(2)}, \ldots, s_{s,a}^{(m_1)}$ from $\bP(\cdot | s,a)$;
			\State
			Initialize: $~\bw^{-}=\wt{\bw}^- = \wh{\bsigma}^-=\bQ^{-(0)}=\bQ^{-(1)} \gets {\bf0}_{\cS \times \cA}$ and $i\gets 0$;	
			\For{each $(s, a)\in \cS\times\cA$} 
			\State \textcolor{OliveGreen}{\emph{\textbackslash \textbackslash Compute empirical estimates of $\bP_{s,a}^{\top}\bv^{-(0)}$ and $\var({\bv^{-(0)}})(s,a)$:}}
			\State 		\label{alg1: compute w1-} 
			$\wt{\bw}^{-}(s,a) \gets \frac{1}{m_1} 
			\sum_{j=1}^{m_1} \bv^{- (0)}(s_{s,a}^{(j)})$;
			
			\State  $\wh{\bsigma}^{-}(s,a)\gets \frac{1}{m_1} \sum_{j=1}^{m_1}(\bv^{- (0)})^2(s_{s,a}^{(j)}) - (\wt{\bw}^{-})^2(s,a)$ ;
			
			\State \textcolor{OliveGreen}{\emph{\textbackslash \textbackslash Shift the empirical estimate to have one-sided error and guarantee monotonicity:}} 
			\State 	
			$\bw^-(s, a) \gets \wt{\bw}^-(s,a) - \sqrt{\alpha_1\wh{\bsigma}^-(s,a)} - \alpha_1^{3/4}\beta$;
			\State \textcolor{OliveGreen}{\emph{\textbackslash \textbackslash Compute coarse estimate of the  $Q$-function and make sure its value is in $[0,\beta]$:}}
			\State
			$\bQ^{-(1)}(s,a) \gets \clip[\br(s,a) + \gamma \bw^{-}(s,a), 0, \beta]$
			
			\EndFor
			\State
			\State\textbf{REPEAT:} \textcolor{OliveGreen}{\emph{\qquad\qquad\textbackslash \textbackslash successively improve}}
			\For{$i=1$ to $R$}
			\State  \textcolor{OliveGreen}{\emph{\textbackslash \textbackslash Compute
					the one-step dynamic programming:}}
			\State\label{alg: pit-} Let ${\bv}^{-(i)} \gets \wt{\bv}^{-(i)}\gets \cT \bQ^{-(i-1)}$, ${\pi}^{-(i)}\gets\wt{\pi}^{-(i)}\gets \pi(\bQ^{-(i-1)})$;
			\State \textcolor{OliveGreen}{\emph{\textbackslash \textbackslash Compute strategy and value and maintain monotonicity:}} 
			\State For {each $s\in \cS$}:
			\State\qquad \label{alg: v2-} if ${\bv}^{-(i)}(s)\le \bv^{-(i-1)}(s)$, then 
			$\bv^{-(i)}(s)\gets \bv^{-(i-1)}(s)$ and $\pi^{-(i)}(s)\gets \pi^{-(i-1)}(s)$;
			\State\textcolor{OliveGreen}{\emph{\textbackslash \textbackslash Obtaining a small batch of samples:}}
			\State For each $(s, a)\in \cS\times\cA$:
			draw independent samples $\wt{s}_{s,a}^{(1)}, \wt{s}_{s,a}^{(2)}, \ldots, \wt{s}_{s,a}^{(m_2)}$ from $\bP(\cdot | {s,a})$;
			\State \textcolor{OliveGreen}{\emph{\textbackslash \textbackslash Compute the expected value,  $\bg^{\pm(i)}$, the estimate of $\bP \big[\bv^{\pm(i)}  - \bv^{\pm (0)}\big]$ with one-sided error:}}
			\State  \label{alg1: compute g-} Let $\wt{\bg}^{-(i)}(s,a)\gets {\frac{1}{m_2}} \sum_{j=1}^{m_2} \big[\bv^{- (i)}(\wt{s}_{s,a}^{(j)}) - \bv^{-(0)}(\wt{s}_{s,a}^{(j)}) \big]$;
			\State Let ${\bg}^{-(i)}(s,a)\gets\wt{\bg}^{-(i)}(s,a)- C(1-\gamma)u$, where $C$ is sufficiently small;
			\State \textcolor{OliveGreen}{\emph{\textbackslash \textbackslash Estimate the approximation error:}}
			\State $\bxi^{- (i)}\gets 2\sqrt{\alpha_1 \bsigma_{\bv^{- (0)}}} +
			2[\alpha_1^{3/4}\beta + C(1-\gamma)u]\cdot \one$
			\State  \textcolor{OliveGreen}{\emph{\textbackslash \textbackslash Improve $\bQ^{- (i)}$  make sure its value is in $[0,\beta]$:}}
			\State \label{alg: q-func-} $\bQ^{- (i+1)}\gets \clip\Big[\br + \gamma\cdot[\bw^{-}+\bg^{- (i)}],0, \beta\Big]$;
			\EndFor
			\State \textbf{return} 
			$\{\bv^{-(i)}, \bQ^{-(i)}, \pi^{-(i)}, \bxi^{- (i)}\}_{i=0}^{R}$
		\end{algorithmic}
	}
\end{algorithm}

\bibliographystyle{apalike}
\bibliography{ref,reference}

\clearpage
\appendix

%
%
%

\newpage
\appendix

%
%
%
%
%

\end{document}